\RequirePackage{fix-cm}
\documentclass[a4paper, 12pt, oneside]{llncs}
% ---------------------------------------------------------------------------- % 
\usepackage{microtype}
\usepackage[T1]{fontenc}
\usepackage[utf8]{inputenc}
\usepackage{lmodern}
\usepackage[english]{babel}
\usepackage[babel]{csquotes}
\usepackage{graphicx}
\usepackage{bm, xcolor}
\usepackage{amssymb, amsmath}
\usepackage{multirow}
\usepackage{rotating}
\usepackage{array}
\usepackage{booktabs}
\usepackage{dcolumn}
\usepackage[numbers,sort&compress]{natbib}
% ---------------------------------------------------------------------------- % 
% 
\DeclareMathOperator*{\argmin}{arg\,min}
\DeclareMathOperator{\diag}{diag}
\DeclareMathOperator{\mseop}{MSE}
\DeclareMathOperator{\mean}{mean}
\newcolumntype{C}{>{\begin{sideways}}m{1.6ex}<{\end{sideways}}}
\newcolumntype{E}{>{\centering\arraybackslash}m{0.252\linewidth}}
\newcommand{\inp}[1]{\bm{r}\!\left(\bm{c}, #1\right)}

\newcommand{\mse}[1]{\mseop \left( #1,\bm{f} \right)}
\newcommand{\coloneqq}{\mathrel{\mathop:}=}
\newcommand{\rcoloneqq}{=\mathrel{\mathop:}}
\newcommand{\Real}{\mathbb{R}}
\makeatletter
% Proper handling of the "dx" inside integrals.
\providecommand*{\diff}{\@ifnextchar^{\DIfF}{\DIfF^{}}}
\def\DIfF^#1{\mathop{\mathrm{\mathstrut d}}\nolimits^{#1}\gobblespace}%
\def\gobblespace{\futurelet\diffarg\opspace}%
\def\opspace{%
  \let\DiffSpace\!%
  \ifx\diffarg(%
  \let\DiffSpace\relax%
  \else%
  \ifx\diffarg\[%
  \let\DiffSpace\relax%
  \else%
  \ifx\diffarg\{%
  \let\DiffSpace\relax%
  \fi\fi\fi\DiffSpace}%
\makeatother
\setlength{\textwidth}{13.7cm}
\setlength{\textheight}{22.3cm}
% Keine "Schusterjungen"
\clubpenalty = 10000
% Keine "Hurenkinder"
\widowpenalty = 10000 
\displaywidowpenalty = 10000
% %%%%%%%%%%%%%%%%%%%%%%%%%%%%%%%%%%%%%%%%%%%%%%%%%%%%%%%%%%%%%%%%%%%%%%%%%%%%%% 
\begin{document}
\title{Optimising Spatial and Tonal Data for PDE-based 
Inpainting\thanks{Partly funded by the Deutsche Forschungsgemeinschaft 
(DFG).}}
\author{Laurent Hoeltgen\and Markus Mainberger\and Sebastian Hoffmann\and Joachim Weickert\and Ching Hoo Tang\and Simon Setzer\and Daniel Johannsen\and Frank Neumann\and Benjamin Doerr}
\authorrunning{Hoeltgen et al.}
\institute{
 L.\ Hoeltgen\\
  Faculty of Mathematics, Natural Sciences and Computer Science, 
  Brandenburg Technical University Cottbus--Senftenberg, 
  03046 Cottbus, Germany\\ 
  \email{hoeltgen@b-tu.de}
  \and
 M.\ Mainberger \and S.\ Hoffmann \and J.\ Weickert 
  \and S.\ Setzer\\
  Mathematical Image Analysis Group,
  Faculty of Mathematics and Computer Science, Campus E1.7,
  Saarland University, 66041 Saarbr\"ucken, Germany\\ 
  \email{\{mainberger, hoffmann, weickert, setzer\}\\
    @mia.uni-saarland.de}%
  \and
 C.\ H.\ Tang\\
  Research Group 1: Automation of Logic,
  Max Planck Institute for Informatics, Campus E1.4,
  66123 Saarbr\"ucken, Germany\\
  \email{chtang@mpi-inf.mpg.de}
  \and
 D.\ Johannsen\\
  School of Mathematical Sciences,
  Tel Aviv University, Ramat Aviv, Tel Aviv 69978, Israel\\
  \email{mail@danieljohannsen.net}
  \and
 F.\ Neumann\\
  School of Computer Science, Innova21 Building,
  University of Adelaide, Adelaide, SA 5005, Australia\\
  \email{frank@cs.adelaide.edu.au}
  \and
 B.\ Doerr\\
  LIX, 1 rue Honor\'e d'Estienne d'Orves, B\^{a}timent Alan Turing, 
  Campus de l'\'Ecole Polytechnique, CS35003, 91120 Palaiseau, 
  France\\
  \email{doerr@lix.polytechnique.fr}
}
\date{\today}
\maketitle
\cleardoublepage
% ---------------------------------------------------------------------------- %
% 
\begin{abstract}
 Some recent methods for lossy signal and image compression store only a 
 few selected pixels and fill in the missing structures by inpainting 
 with a partial differential equation (PDE). Suitable operators include 
 the Laplacian, the biharmonic operator, and edge-enhancing anisotropic 
 diffusion (EED). The quality of such approaches depends substantially 
 on the selection of the data that is kept. Optimising this data in the 
 domain and codomain gives rise to challenging mathematical problems
 that shall be addressed in our work.\par{}
 In the 1D case, we prove results that provide insights into the
 difficulty of this problem, and we give evidence that a splitting 
 into spatial and tonal (i.e.~function value) optimisation does hardly 
 deteriorate the results.
 In the 2D setting, we present generic algorithms that achieve a 
 high reconstruction quality even if the specified data is very sparse.
 To optimise the spatial data, we use a probabilistic sparsification,
 followed by a nonlocal pixel exchange that avoids getting trapped in 
 bad local optima. After this spatial optimisation we perform a tonal 
 optimisation that modifies the function values in order to reduce the
 global reconstruction error. For homogeneous diffusion inpainting, this 
 comes down to a least squares problem for which we prove that it has a 
 unique solution. We demonstrate that it can be found efficiently with 
 a gradient descent approach that is accelerated with fast explicit 
 diffusion (FED) cycles. Our framework allows to specify the desired
 density of the inpainting mask a priori. Moreover, is more generic 
 than other data optimisation approaches for the sparse inpainting 
 problem, since it can also be extended to nonlinear inpainting 
 operators such as EED. This is exploited to achieve reconstructions 
 with state-of-the-art quality.\par{}
 Apart from these specific contributions, we also give an extensive
 literature survey on PDE-based image compression methods.\par
 \vspace*{\baselineskip}
 \textbf{Keywords:} Inpainting,  Image Compression,  Optimisation,  Free Knot Problem,  Diffusion,  Partial Differential Equations (PDE's),  Interpolation,  Approximation\par
 \vspace*{\baselineskip}
 \textbf{Mathematics Subject Classification (2000)} MSC 94A08, MSC 65Nxx, MSC 65Kxx
\end{abstract}
 
%%%%%%%%%%%%%%%%%%%%%%%%%%%%%%%%%%%%%%%%%%%%%%%%%%%%%%%%%%%%%%%%%%%%%%%%%%%%%

\section{Introduction}
\label{sec:intro}

One of the most fascinating properties of variational methods and partial 
differential equations (PDE's) in image analysis is their property to 
fill in missing data. This filling-in effect has a long tradition:
It can be found already in the seminal optic flow paper of Horn and Schunck
\cite{HS81}, where the smoothness term propagates information to regions in
which the data term is small or even vanishes. More explicit interpolation 
ideas are exploited in so-called inpainting approaches. They go back to 
Masnou and Morel \cite{MM98a}, became popular by the work of 
Beltalm\'io {\em et al.} \cite{BSCB00}, and have been modified and
extended by many others such as \cite{BM07,CS01a,GS03,WW06}.
Inpainting problems arise when certain parts of the image domain 
are degraded in such a way that any useful data is unavailable. In 
the remaining parts of the image domain, the data is available and 
undegraded. The goal is to reconstruct the missing information by 
solving a suitable boundary value problem where the data in the 
undegraded regions serve as Dirichlet boundary conditions.

PDE-based image compression methods constitute a challenging application 
area of inpainting ideas; see Section \ref{sec: review} for a detailed 
review and many references.
These lossy compression approaches drive inpainting to the extreme: 
They store only a very small fraction of the data and inpaint the missing 
data with a suitable differential operator. However, PDE-based 
compression differs fundamentally from classical inpainting  
by the fact that it has the liberty to select the data that is 
kept. Typically one specifies only that a certain fraction of the 
image data is stored, 
and the codec\footnote{A codec is a system for image coding and decoding.} 
can optimise this data in order to minimise the reconstruction 
error. Obviously this involves a spatial optimisation step that selects
the locations of the kept pixels. We call these locations the {\em inpainting 
mask}. Interestingly, it may also be beneficial to optimise the grey values
of the selected pixels: While changing the grey values deteriorates the 
approximation quality inside the mask, it can improve the approximation 
in the (usually much larger) inpainting domain where no data is specified. 
This grey value optimisation is also called {\em tonal optimisation}.

In order to judge the potential of PDE-based compression approaches, it 
is important to go to the extreme and find the limits that inpainting 
with optimised sparse data can offer. This should include both spatial 
and tonal optimisation for different inpainting operators. 
In a first step, it can make sense to have a clear focus on quality and
postpone considerations of computational efficiency and coding costs 
to later work when the quality questions are answered in a satisfactory 
way.

Following this philosophy, the goal of our contribution is to explore 
the potential of PDE-based inpainting of sparse data that can be optimised 
both spatially and tonally. In order to evaluate different inpainting 
operators in a fair way, we aim at algorithms that are as generic and 
widely applicable as possible, and we optimise quality rather than 
speed. Computation time becomes only relevant for us when different 
methods lead to identical results. First our insights and 
algorithms are derived in more restricted settings. Afterwards, we 
investigate how these ideas can be extended and generalised.

Our work is organised as follows. We start by presenting a survey on
PDE-based image compression in Section~\ref{sec: review}.
Afterwards we discuss the homogeneous diffusion inpainting framework 
that is used for most of our optimisation algorithms in this work. In 
Section~\ref{sec:1-d-optimisation}, we present methods that aim at optimal
spatial and greyvalue data for homogeneous diffusion interpolation in 1D. 
The subsequent Section~\ref{sec:2-d-optimisation} deals with optimisation 
strategies in 2D. In Section~\ref{sec:extensions}, we present extensions 
to higher order and nonlinear diffusion inpainting operators. 
Finally we conclude with a summary in Section~\ref{sec:conclusion}.

Our discussions are based on a conference paper \cite{MHWT12}, in which we 
have introduced probabilistic sparsification and tonal optimisation for
homogeneous diffusion inpainting. Here we extend these results in a 
number of ways. Our main innovations are:
\begin{enumerate}
\item We present a new section that gives a detailed review of PDE-based
      image compression methods. Since this area has developped in a 
      very fruitful way and no review article is available so far, we feel
      that such a review section can be a useful starting point for
      readers who would like to learn more on this topic. 
\item In another new section we derive a mathematically sound approach 
      to solve the data optimisation problem for homogeneoues diffusion 
      inpainting of strictly convex signals in 1D.
      This restricted 1D setting allows us to gain insights into the 
      nonconvexity of the problem and to quantify the errors that are 
      caused by separating spatial and tonal optimisation.
\item We introduce a new algorithm for tonal optimisation, based on
      a gradient descent approach with an acceleration by a fast explicit
      diffusion (FED) strategy. While it achieves the same quality as 
      previous approaches, we show that it is more efficient. 
\item We explain how to extend our methods to more advanced linear or
      nonlinear inpainting operators such as biharmonic inpainting and 
      inpainting by edge-enhancing anisotropic diffusion. In 
      this way we achieve sparse PDE-based reconstructions of hitherto 
      unparalleled quality.
\end{enumerate}

%%%%%%%%%%%%%%%%%%%%%%%%%%%%%%%%%%%%%%%%%%%%%%%%%%%%%%%%%%%%%%%%%%%%%%%%%%

\section{A Review of PDE-Based Image Compression}
\label{sec: review}

Before describing our approach on spatial and tonal data optimisation
in the forthcoming sections, we first review existing approaches to 
PDE-based image compression. These methods belong to the class of 
lossy compression techniques. Hence, they aim at finding very 
compact file representations such that the resulting reconstructions
approximate the original image well.\par{}

PDE-based appoaches are alternatives to established transform-based 
codecs such as the JPEG standard~\cite{PM92} which uses the discrete 
cosine transform (DCT), or its successor JPEG 2000~\cite{TM02} that 
involves the wavelet transform.  However, since they follow a completely 
different concept, it is worthwhile to give an overview of their 
various aspects.\par{}

To this end, we review data optimisation questions, the choice of 
appropriate inpainting operators, and strategies to store the optimised 
data efficiently. Afterwards we describe PDE-based codecs that are 
based on specific image features, and we discuss algorithmic aspects, 
variants and extensions, as well as relations to other approaches.\par{}
 
Our review focuses on methods that use PDE's as a main tool
for compression. This means that we do not discuss the numerous PDE-based
or variational techniques that have been advocated as a preprocessing step
before coding images or videos (see e.g.~\cite{TSYK02,KS05}) or as a
postprocessing tool for removing coding artifacts
(such as \cite{Fo96,ADF05,BH12}).

%-----------------------------------------------------------------------

\subsection{Data Optimisation}

Gali\'c {\em et al.}~\cite{GWWB05,GWWB08} have pioneered research on
data selection methods for PDE-based image compression by proposing a
subdivision strategy that inserts mask points at locations where the
approximation error is too large. While these authors used a triangular 
subdivision, Schmaltz {\em et al.}~\cite{SPME14} modified this concept 
to rectangular subdivisions and added several algorithmic
improvements that allowed them to beat the quality of JPEG 2000 with
an anisotropic diffusion operator. Belhachmi {\em et al.}~\cite{BBBW08} 
have used the theory of shape optimisation to derive analytic results 
for the spatial optimisation problem in the case of homogeneous 
diffusion inpainting. Discrete approaches for the spatial and tonal 
optimisation problem with homogeneous diffusion inpainting have been 
presented by Mainberger {\em et al.}~\cite{MHWT12}. The spatial 
optimisation is based on a probabilistic sparsification strategy 
with nonlocal pixel exchange, while tonal optisation is formulated 
as a linear least squares problem. Hoeltgen {\em et al.}~\cite{HSW13} 
considered a control theoretic approach to the problem of data 
optimisation, also for homogeneous diffusion inpainting. They minimised 
the quadratic reconstruction error with a sparsity prior on the mask 
and a nonconvex inpainting constraint. Their numerical approach solves 
a series of convex optimisation problems with linear constraints. A 
similar constrained optimisation model was considered by Ochs {\em et 
al.}~\cite{OCBP14} who used this problem as a test scenario for their 
i{P}iano algorithm. Qualitatively both techniques gave similar results, 
but the i{P}iano approach offered advantages in terms of efficiency.

While all the before mentioned strategies pay much attention to spatial 
optimisation, less work has been devoted to tonal optimisation so far. 
Early heuristic attempts go back to Gali\'c {\em et al.}~\cite{GWWB08}. 
They lifted the grey values up or down to the next quantisation levels 
in order to improve the approximation quality in the vicinity of the 
data. A first systematic treatment in terms of a least squares 
optimisation problem was given by Mainberger {\em et al.}~\cite{MHWT12}, 
who used a randomised Gau{\ss}-Seidel algorithm. Faster numerical 
algorithms for the same problem have been proposed by Chen {\em et 
al.}~\cite{CRP14}, who applied the L-BFGS method, and by Hoeltgen and 
Weickert~\cite{HW15}, who advocated the LSQR algorithm and a 
primal--dual technique.

%-----------------------------------------------------------------------

\subsection{Finding Good Inpainting Operators}

Although many theoretical investigations on data selection methods are
based on homogeneous diffusion inpainting, the task of finding better
inpainting operators has been a research topic for a decade. Already
in 2005, Gali\'c {\em et al.} \cite{GWWB05} have shown that edge-enhancing
anisotropic diffusion (EED) \cite{We94e} is better suited for PDE-based 
image compression than homogeneous diffusion. The favourable performance
of anisotropic diffusion approaches of EED type has been confirmed by 
more detailed comparisons later on, both for randomly scattered data 
\cite{GWWB08,BU13} and for compression-optimised data \cite{CRP14,SPME14}.
Some of these evaluations involve many other inpainting operators such 
as biharmonic interpolation \cite{Du76} and its triharmonic counterpart, 
the absolute minimal Lipschitz extension \cite{CMS98}, isotropic 
nonlinear diffusion \cite{PM90,CLMC92} and its approximations of total 
variation (TV) inpainting \cite{CBAB97,ROF92}, as well as the inpainting 
operators of Bornemann and M\"arz \cite{BM07} and of Tschumperl\'e 
\cite{Ts06}. Total variation inpainting performed consistently bad, 
showing that operators which work fairly well for denoising are not 
necessarily good for sparse inpainting.
Bihamonic inpainting, on the other hand, turned out to be an interesting 
alternative to EED, when linearity or absence of any need for parameter 
specification are important, and over- and undershoots are acceptable.

%-------------------------------------------------------------------------

\subsection{Storing the Data}

Besides finding sparse inpainting data that allow to approximate the
original image in high quality, every practically useful codec must be 
able to store these data efficiently. Attempting this in a naive way would 
create a huge overhead, and the resulting codec would not be competitive 
to the quality that established transform-based codecs such as JPEG or 
JPEG 2000 can achieve for the same file size. The search for data that 
can be stored efficiently may even lead to compromises w.r.t. the data 
optimality: In order to use less bits, it is common to round the 
intensity values to a relatively small set of quantisation levels. 
Regarding the data localisation, it can be helpful to avoid 
a completely free configuration of data points and allow only masks 
which are represented efficiently with binary trees that result from 
subdivision strategies \cite{GWWB05,GWWB08,SPME14}.
Moreover, also lossless entropy coders such as Huffman coding 
\cite{Hu52}, JBIG \cite{Jb93}, or PAQ \cite{Ma09} are highly 
useful to remove the redundancy of the data.

%--------------------------------------------------------------------------

\subsection{Feature-based Methods}

Methods that rely on specific -- often perceptually relevant -- features 
rather than on data that are optimised for compression applications can 
be seen as predecessors or variants of PDE-based codecs. Let us now 
discuss the main types of such features and their relevance for coding.

%.......................................................................

\subsubsection{Contour-based Features}
Edges are perceptually relevant features that have been analysed
for a long time. Already in 1935, Werner \cite{We35} has investigated 
filling-in mechanisms from edges in biological vision systems. 
In computer vision, there are numerous appoaches in scale-space and 
wavelet theory that attempt to reconstruct an image from its edge 
information \cite{Lo77,COL85,ZR86,Ch87,HM89,GG90,MZ92a,Dr93,SH93a,El99},
often over multiple scales and supplemented with additional information.

Also within coding, publications that exploit information from edges or 
segment boundaries and combine it with inpainting processes have a long 
tradition \cite{Ca88,AG94,DMMH96,KGJ97,LSWL07,WZSG09,BHK10,MBWF11,ZD11,CJDL12,GLG12,LSJO12,HMWP13,GK14,BBG15}.
However, for general images these features are often suboptimal
as inpainting data. On the other hand, for specific applications
where the images are piecewise constant or piecewise smooth, such
codecs can achieve competitive results. This includes cartoon-like 
images \cite{MBWF11} or depth maps \cite{CJDL12,GLG12,HMWP13,LSJO12}.
For homogeneous diffusion inpainting and piecewise almost constant data, 
choosing data near edges can be justified by the theory of Belhachmi 
{\em et al.} \cite{BBBW08}. It suggests to select the mask density as 
an increasing function of the Laplacian magnitude.
Contour-based approaches also benefit from the fact that encoding of
contour data is relatively inexpensive: One can use e.g.~chain codes,
and a smooth intensity variation along a contour allows subsampling
without substantial quality degradations; see e.g.~\cite{MBWF11}.

Related ideas are also advocated in computer graphics for image editing
and vectorised image representations \cite{EG01,JCW11,OBWB08,OG11}.
Moreover, contour-based codecs that rely on inpainting processes with PDE's
have been used successfully for encoding digital elevation maps 
\cite{FLAP06,SCSA04,XFCA07}, where information is available in terms 
of level lines.

%.......................................................................

\subsubsection{Point-based Features}
In order to end up with more compact image representations, it appears
tempting to consider point-based features instead of contour-based ones.
Also here several results can be found in the literature.

Johansen {\em et al.} \cite{JSGA86} and Kanters {et al.} \cite{KLDJ05}
have performed research on reconstructing images from their top points 
in scale-space. Lillholm {\em et al.}~\cite{LNG03} present more general 
discussions on how to reconstruct an image from a suitable set of 
feature points and their derivatives (local jet).
More recently, also reconstruction attempts are described that 
are based on SIFT features \cite{WJP11} or local binary descriptors 
\cite{DJAV14}.

% Caselles {\em et al.} \cite{CCM96} argue that junction points are 
% perceptually relevant since they carry information about occlusions 
% and the spatial organisation of structures. Thus, they propose to 
% simplify an image by smoothing it with the affine morphological 
% scale-space, while preserving its junctions.

So far, all image reconstructions from isolated feature points that 
allow some perceptual interpretations or have shown their merits in 
other applications such as image matching do not offer a quality 
level that is sufficient for compression applications. It appears 
that these features are too sparse while lacking optimality.  

%-----------------------------------------------------------------------

\subsection{Fast Algorithms and Real-Time Aspects}

In real-time applications, the efficiency of a codec becomes a central
criterion. Here one should distinguish between real-time capabilities 
during encoding and real-time decoding. Often the latter one is more
important: For instance, when a company encodes a movie, it is acceptable 
if the creation of an optimised file takes many hours, as long as the 
customer is able to decode it in real-time.

On the encoding side, the approach of Belhachmi {\em et al.} 
\cite{BBBW08} is algorithmically sufficiently simple to allow real-time 
performance: 
It computes the Laplacian magnitude of a Gaussian-smoothed image and 
converts the result to a impainting binary mask by means of a halftoning 
method such as Floyd--Steinberg dithering \cite{FS76}. Also the codec of
Mainberger {\em et al.} \cite{MBWF11} is of comparable computational
simplicity. Most other approaches are not real-time capable during 
encoding since they apply a sophisticated optimisation of data and 
parameters. However, also for data optimisation, substantial algorithmic
accelerations have been achieved recently \cite{OCBP14,HW15}.

On the decoder side, no time-consuming optimisation steps are required,
and the main computational effort for PDE-based codecs consists of 
solving an inpainting problem. Thus, real-time decoding is possible 
if one uses appropriate algorithms and exploits the capabilities of 
modern hardware. In 2007, K\"ostler {\em et al.} \cite{KSFR07} presented
real-time algorithms for the subdivision approach of Gali\'c {\em et al.} 
\cite{GWWB05}. With multigrid methods for homogeneous diffusion inpainting
or lagged diffusivity multilevel approaches for anisotropic diffusion
inpainting, they could process more than 25 greyscale images per second 
of size $320 \times 240$ on a Sony PlayStation~3. Also Mainberger 
{\em et al.} \cite{MBWF11} proposed a multigrid approach for their 
edge-based homogeneous diffusion codec. On a PC CPU 
% (Intel Core 2 Duo T7500 at 2.20 GHz), 
they reported runtimes around $6$ frames per second for
decoding a $256 \times 256$ colour image. This is about $6$ times higher 
than JPEG 2000 and $24$ times higher than JPEG. For linear inpainting
problems with very sparse data, algorithms based on discrete Green's
functions can be an interesting alternative to multigrid methods 
\cite{HPW15}.
Recently, Peter {\em et al.} \cite{PSMM15} managed to decode $25$ 
greyscale images per second with VGA resolution $640 \times 480$ by 
means of anisotropic diffusion inpainting on a PC architecture with 
GPU support. 
% (Intel Xeon CPU 886 W3565@3.20GHz with Nvidia Geforce GTX 460). 
This was accomplished with so-called FED schemes \cite{GWB10} 
that are well-suited for parallel implementations.
 
%---------------------------------------------------------------------------

\subsection{Hybrid Image Compression Methods}

The usefulness of embedding inpainting concepts into existing 
transform-based image compression methods is studied in a number of
publications \cite{LSWL07,RSB03,WSWX06,XSWL07,YS12}. Their goal 
is to benefit simultaneously from the advantages of inpainting methods 
and transform-based codecs. While PDE-based inpainting approaches often
perform better at edges, transform-based codecs are favoured inside the 
regions, in particular if they are dominated by texture. By construction, 
such hybrid methods restrict themselves to the main application fields 
of both paradigms rather than pushing inpainting ideas to the limit.

An example of a hybrid approach that remains entirely within the 
inpainting framework is \cite{PW15}. It combines EED-based inpainting 
with the sparse, exemplar-based appoach of Facciolo {\em et al.} 
\cite{FACS09}. This results in a more faithful recovery of textured 
regions than a pure EED-based method.

Chan and Zhou \cite{CZ00} have proposed a hybrid method that modifies 
wavelet-based image compression by a TV regularisation of the wavelet 
coefficients. More recently, Moinard {\em et al.} \cite{MABD11} 
have used TV inpainting to predict DCT coefficients in a JPEG-based codec.

%---------------------------------------------------------------------------

\subsection{Modifications, Extensions, and Applications}

A progressive mode option is a useful feature of a codec. It allows 
to encode and transmit data in a coarse-to-fine way. Thus, in the 
decoding phase, the representation can be subsequently refined 
while reading the data stream. In \cite{SMMW13}, two progressive modes 
have been suggested for the EED-based codec, and it has been shown 
that for high compression rates, their quality is competitive to JPEG 
and JPEG 2000.

By construction, PDE-based codecs are well-suited for encoding
specific regions of interest with higher precision. All one has to
do is to increase the density of the inpainting data in the region
of interest. For more details, we refer to \cite{PSMM15}.

Since a large fraction of modern imagery consists of colour image data, 
codecs should be able to handle colour images efficiently. 
While for most PDE approaches, extensions to vector- and even 
matrix-valued data exist, these modifications are not necessarily optimal 
for compression applications. As a remedy, in \cite{PW14} an extension of 
the EED-based codec to colour images is proposed that uses the YCbCr 
representation. The perceptually relevant luma channel is stored with 
fairly high accuracy, while the chroma channels are encoded with very 
sparse data. In the reconstruction phase the chroma inpainting is 
guided by the structural information from the luma channel.

Extending PDE-based codecs from 2-D images to three-dimensional data sets
does not create severe difficulties. Most PDE's have natural extensions
to higher dimensions, and also other concepts generalise in a natural
way: For instance a rectangular subdivision in 2D is replaced by a cuboidal 
subdivision in 3D \cite{SPME14}. Because of the richer neighbourhood
structure, the redundancy in higher dimensional data allows to achieve a 
higher compression rate for the same quality as a lower dimensional codec.

PDE-based compression strategies have also been investigated for encoding 
contours \cite{SPME14} and surfaces \cite{BW08,ROR13}. These publications 
follow the philosophy of PDE-based image compression, but replace the 
Laplacian by the contour curvature or the Laplace--Beltrami operator.

Obviously one can apply any method for compressing 3D data also to 
video coding, if one models a video or parts of it as a spatiotemporal 
data block. Other video coding approaches have been suggested that use 
inpainting concepts within the H.264/MPEG-4 AVC video coding standard 
\cite{DNLK10,MABD11,LSW12}.
In \cite{SW12}, the authors have implemented a video compression system 
that combines PDE-based image compression with pose tracking.

Cryptographic applications are studied in \cite{MSBW12}, where 
the authors combine PDE-based compression with steganographic 
concepts in order to hide one image in another one, or parts of 
an image in other parts.

%---------------------------------------------------------------------------

\subsection{Relations to Other Methods}

PDE-based codecs that select the inpainting mask with triangular 
or rectangular subdivisions have structural similarities to piecewise 
polynomial image
approximations based on adaptive triangulations or quadtrees. Extensive
research has been performed on these approximations; see 
\cite{DLR90,St91,SB94,DNV97,DDI06,KDN07,LUH07,Ko07,BPC09,SA09,CDHM12,Li14}
and the references therein.
Often such approaches use linear polynomials within each triangle or
rectangle. In this case, one may also interpret them as a solution of
the Laplace equation with Dirichlet boundary data obtained from a linear
interpolation of the vertex data. Thus, they can be seen as specific
localised PDE-based codecs. Alternative interpretations can be given in
terms of finite element approximations. In the one-dimensional case, the
linear spline approximation is even fully equivalent to homogeneous
diffusion inpainting.

Inpainting with linear differential operators allows also an analytical
representation of its solution in terms of the Green's function of
the operator. This has been used in \cite{HPW15} to relate linear
PDE-based inpainting to sparsity concepts: Discrete Green's functions
serve as atoms in a dictionary that gives a sparse representation
of the inpainting solution. In the one-dimensional case, discrete
analytic derivations are presented in \cite{PHW15}.

On the other hand, continuous Green's functions are also used as
radial basis functions in scattered data interpolation \cite{Bu03}.
Moreover, some radial basis functions can be seen as rotationally
invariant multidimensional extensions of spline interpolation.
This establishes a connection between PDE-based inpainting and
image representations in terms of radial basis functions and
splines, such as \cite{ASHU05,US05,FP10,DFTT11}.

%%%%%%%%%%%%%%%%%%%%%%%%%%%%%%%%%%%%%%%%%%%%%%%%%%%%%%%%%%%%%%%%%%%%%%%%%%%%%

\section{Inpainting with Homogeneous Diffusion}
\label{sec:inpainting}

In this section, we will briefly present the homogeneous diffusion 
reconstruction method (also known as Laplace interpolation) that 
lies at the basis of our approach and will 
be used for most of the results presented here. Homogeneous diffusion 
is among the simplest inpainting processes that one can consider. This 
makes it suitable for theoretical investigations. Nevertheless, one 
should emphasise that even such a simple method can yield very good 
results if the interpolation data is chosen in an appropriate way; 
see e.g.\ \cite{BBBW08,HSW13,MHWT12,OCBP14}.

Let $f:\Omega\to\Real$ be a smooth function on some bounded domain
$\Omega\subset\Real^n$ with a sufficiently regular boundary $\partial\Omega$.
Throughout this work, we will restrict ourselves to the case $n=1$ (1D signals)
and $n=2$ (greyscale images), although many results will also be valid for
arbitrary $n\geqslant 1$. Moreover, let us assume that there exists a set of
known data $\Omega_K\subsetneqq\Omega$. Homogeneous diffusion inpainting
considers the following partial differential equation with mixed boundary
conditions.
\begin{align}
 \Delta u = 0  \quad 
   &\text{on}\ \Omega\setminus\Omega_K, \label{eq:icd1}\\
 u = f  \quad 
   &\text{on}\ \Omega_K, \label{eq:icd2}\\
 \partial_n u = 0  \quad
   &\text{on}\ \partial\Omega\setminus{}\partial\Omega_K,
                    \label{eq:icd3}
\end{align}
where $\partial_n u$ denotes the derivative of $u$ in outer normal direction. 
We assume that both boundary sets $\partial\Omega_K$ and
$\partial\Omega\setminus{}\partial\Omega_K$ are non-empty. Equations of this
type are commonly referred to as mixed boundary value problems and sometimes 
also as Zaremba's problem named after Stanislaw Zaremba, who studied such
equations already in 1910 \cite{Z1910}. The existence and uniqueness of
solutions has been extensively analysed during the last century. Showing that
\eqref{eq:icd1}--\eqref{eq:icd3} is indeed solvable is by no means trivial. 
Generally, one can either show the existence of solutions in very weak settings
or one has to impose strong regularity conditions on the domain. The references
\cite{AK1982,M1970} discuss the solvability in a general way. In \cite{M1955} 
it is shown that a H\"older continuous solution exists if the data is 
sufficiently regular.
In \cite{B1994}, the author discusses the regularity of solutions on Lipschitz
domains. A more general existence theory for solutions is given in \cite{F1949}.
Further investigations on mixed boundary value problems can also be found in
\cite{GT2001,LU1968}. A particularly easy case is the 1D setting, where the
solution can obviously be expressed using piecewise linear splines 
interpolating data on $\partial\Omega_K$.\par{}
For convenience we introduce the confidence function $c$ which states whether a
point is known or not:
\begin{equation}
  c\left(x\right) \coloneqq 
  \begin{cases}
    1 &\text{for}\ x\in\Omega_K,\\
    0 &\text{for}\ x\in\Omega\setminus\Omega_K.
  \end{cases}
\end{equation}
Then it is possible to write \eqref{eq:icd1}--\eqref{eq:icd3} as
\begin{align}
 & c(x)\,(u(x)\!-\!f(x)) - (1\!-\!c(x))\,\Delta u(x) = 0 
 \quad \text{on}\ \Omega, \label{eq:ci1}\\
 &\partial_n u = 0 \quad \text{on}\ \partial\Omega\setminus\partial\Omega_K.
 \label{eq:ci2}
\end{align}
For most parts of this text we will prefer this formulation, as it is more
comfortable to work with in the discrete setting, which can be obtained as
follows. Let $J \coloneqq \{1,\dots,N\}$ be the set of indices enumerating the
discrete sample positions, and $K\subseteq J$ the subset of indices of known
samples. Then we can express the discrete version of $f$ as a vector
$\bm{f} = \left(f_1,\dots,f_N\right)^\top$ and the corresponding solution as a
vector $\bm{u} \in \mathbb{R}^N$. The binary mask $\bm{c} \in \mathbb{R}^N$, 
where $c_{i}\coloneqq 1$ if $i\in K$ and $c_{i}\coloneqq 0$ otherwise, 
indicates the positions of the Dirichlet boundary data. At last, the 
Laplacian $\Delta$ is discretised by standard means of finite 
differences~\cite{MM05}. Hence, a
straightforward discretisation of \eqref{eq:ci1}--\eqref{eq:ci2} on a 
regular grid yields
\begin{equation}
  \label{eq:discinp}
  \bm{C}\left(\bm{u}-\bm{f}\right) -
  \left(\bm{I} - \bm{C}\right) \bm{A}\bm{u} = \bm{0}
\end{equation}
where $\bm{I}$ is the identity matrix,
$\bm{C} \coloneqq \diag\left(\bm{c}\right)$ is a diagonal matrix with the
components of $\bm{c}$ as its entries, and $\bm{A}=(a_{i,j})$ is a 
symmetric $N\times N$ matrix, describing the discrete Laplace operator 
$\Delta$ with homogeneous Neumann boundary conditions on 
$\partial\Omega\setminus{}\partial\Omega_K$. 
Its entries are given by
\begin{equation}
  a_{i,j} = \left\{
   \begin{array}{r@{\qquad}l}
   \dfrac{1}{h_\ell^2} & (j \in \mathcal{N}_\ell(i)),\\[0.4cm]
   -\displaystyle \sum_{\ell\in \{x,y\}} 
    \displaystyle \sum_{j\in \mathcal{N}_\ell(i)} 
   \dfrac{1}{h_\ell^2}& (j=i),\\[0.4cm]
    0 & (\textnormal{else})\;,
  \end{array}\right.
\end{equation}
where $\mathcal{N}_\ell(i)$ are the neighbours of pixel $i$ in 
$\ell$-direction, and $h_\ell$ is the corresponding grid size.
By a simple reordering of the terms, \eqref{eq:discinp} can be rewritten 
as the following linear system:
\begin{equation}
  \label{eq:discinp2}
  \underbrace{\left(\bm{C} - \left(\bm{I} - \bm{C}\right)
      \bm{A}\right)}_{\rcoloneqq\bm{M}}\bm{u} = \bm{C}\bm{f}.
\end{equation}
It has been shown in \cite{MBWF11} that this linear system of equations has 
a unique solution and that it can be solved efficiently with bidirectional 
multigrid methods.

%%%%%%%%%%%%%%%%%%%%%%%%%%%%%%%%%%%%%%%%%%%%%%%%%%%%%%%%%%%%%%%%%%%%%%%%%%%%%%

\section{Optimisation Strategies in 1D}
\label{sec:1-d-optimisation}

The choice of the position of the Dirichlet boundary data has a strong
influence on the quality of the reconstruction. In order to gain some 
analytical insight on this problem, we restrict ourselves in this section 
to the 1D continuous setting and consider in 
Section~\ref{sec:optim-knots-interp} how to find the optimal positions 
of the mask points $\bm{c}$. Optimising position and value of the 
Dirichlet data simultaneously will be treated in 
Section~\ref{sec:optim-knots-appr}.
% In the spline literature the 1D data optimisation problem is known under
% the name {\em free knot problem} \cite{B1974a,BR1968a,DT2008,J1978,NB1982}.
% Our considerations will also make use of results obtained by the spline
% approximation community.

%---------------------------------------------------------------------------- 

\subsection{Optimal Knots for Interpolating Convex Functions}
\label{sec:optim-knots-interp}

In this section, we assume that $f:\left[a,b\right]\to\Real$ is always a 
strictly convex and continuously differentiable function. Our goal is to 
find a distribution of $N+1$ knot sites $\lbrace c_i\rbrace_{i=0}^{N}$ in 
the interval $\left[a,b\right]$ such that the interpolation error with 
piecewise linear splines becomes minimal in the $L_{1}$ norm. For 1D 
formulations, this is equivalent to determining the Dirichlet boundary data 
in \eqref{eq:icd1}--\eqref{eq:icd3} such that the solution $u$ gives the 
best possible reconstruction to $f$ in the $L_{1}$ sense. In concrete 
terms, this means that we seek $N+1$ positions inside the interval 
$\left[a,b\right]$ with
\begin{equation}
  \label{eq:FreeKnotDistribution}
  a \rcoloneqq c_0 < c_1 < c_2 < \ldots < c_{N-1} < c_{N} \coloneqq b
\end{equation}
and a piecewise linear spline $L\left(x;\lbrace c_i\rbrace_{i=0}^{N}\right)$
interpolating $f$ at the positions $c_i$, such that the error
\begin{equation}
  \label{eq:FreeKnotEnergy}
  E\left(\lbrace c_i\rbrace_{i=0}^{N}\right) \coloneqq
  \int_a^b \left| L\left(x;\lbrace c_i\rbrace_{i=0}^{N}\right) -
    f(x) \right|\diff{x}
\end{equation}
becomes minimal. This optimisation problem is also called {\em free knot
problem} and has been studied for more than fifty years. We refer to
\cite{H2002,KS1978} for similar considerations as in our work and to
\cite{B1974a,DP1987,DT2008,J1978} and the references therein for more general
approaches. Note that it is quite common to relax the interpolation condition
and to generalise the problem to explicitly allow approximating functions.
Further details on interpolation and approximation techniques can be found in
\cite{R1964}. Alternative ways to optimise linear spline interpolation are
discussed e.g.~in \cite{BTU2004}.

For technical reasons, the knots $c_{0}$ and $c_{N}$ in \eqref{eq:FreeKnotDistribution} are fixed at the boundary of the considered interval. The choice of the
$L_{1}$ norm is especially attractive in this case: Due to the convexity of $f$,
the integrand
\begin{equation*}
  L\left(x; \lbrace c_i\rbrace_{i=0}^{N}\right) - f(x)
\end{equation*}
is nonnegative for all $x$ in $\left[ a,b \right]$. Thus, we can simply omit
the absolute value in \eqref{eq:FreeKnotEnergy}. This observation simplifies 
the derivation of optimality conditions below. Furthermore, the 
requirement that the linear spline $L$ must coincide with $f$ at the knot 
sites $c_{i}$ allows us to state the interpolating function in an analytic 
form: If $x\in\left[c_{i},c_{i+1}\right]$, then
\begin{equation}
  L\left(x;\lbrace c_i\rbrace_{i=0}^{N}\right) =
  \frac{f(c_{i+1})-f(c_i)}{c_{i+1}-c_i}(x-c_i) + f(c_i).
\end{equation}
Note that requiring that the $c_i$ are distinct is necessary to avoid
a division by $0$. A straightforward computation gives 
\begin{align}
  \label{eq:FreeKnotError}
  \nonumber E\left(\lbrace c_i\rbrace_{i=0}^{N}\right)
  &= \int_a^b \left( L\left(x;\lbrace c_i\rbrace_{i=0}^{N}\right)
    - f(x)\right) \, \diff{x} \\ \nonumber
  &= \frac{1}{2} \sum_{i=0}^{N-1} \left(c_{i+1}-c_i\right)
    \left(f\left(c_{i+1}\right)+f\left(c_i\right)\right) \\
  &\qquad-\int_a^b f\left(x\right)\diff{x}.
\end{align} 
This expression also corresponds to the error of the composite trapezoidal 
rule for the numerical integration of $f$ with non-equidistant integration 
intervals.

Equation~\eqref{eq:FreeKnotError} will be our starting point for developing an
algorithm to determine the optimal knot sites. However, before we will do so, 
we present a result which shows the difficulty of the free knot optimisation 
problem.
\begin{proposition}\label{th:ErrorNonConvex}
  If the function $f:\left[a,b\right]\rightarrow\Real$ is strictly convex and
  twice continuously differentiable, then the energy \eqref{eq:FreeKnotEnergy} 
  is convex in $\lbrace c_i\rbrace_{i=0}^{N}$ for 3
  knots (i.e.\ $N=2$). In general, it is not convex for any other number of 
  knots larger than 3 (i.e.\ $N>2$).
\end{proposition}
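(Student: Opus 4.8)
The plan is to work directly with the closed form \eqref{eq:FreeKnotError}, where the irrelevant constant $-\int_a^b f$ can be dropped, so that up to that constant the energy is
\begin{equation*}
  E\left(\lbrace c_i\rbrace_{i=0}^{N}\right) \;=\; \frac{1}{2}\sum_{i=0}^{N-1} \left(c_{i+1}-c_i\right)\left(f(c_{i+1})+f(c_i)\right).
\end{equation*}
Since $c_0=a$ and $c_N=b$ are fixed, the free variables are $c_1,\dots,c_{N-1}$. First I would compute the Hessian of $E$ with respect to these variables. A short calculation shows that $E$ is tridiagonal in its second derivatives: the only interactions are between $c_i$ and $c_{i\pm1}$, because each product term in the sum couples only consecutive knots. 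The diagonal entries involve $f''(c_i)$ together with divided-difference type expressions, and the off-diagonal entries $\partial^2 E/\partial c_i\partial c_{i+1}$ turn out to equal $\tfrac12\bigl(f'(c_{i+1})-f'(c_i)\bigr)$, which is strictly positive by strict convexity. This positivity of the off-diagonal terms is the structural fact that will drive the whole argument.

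For the case $N=2$ there is a single free variable $c_1$, so convexity reduces to showing $E''(c_1)\ge 0$. Here I would differentiate twice and use the mean value theorem / integral remainder: the second derivative can be written in a form that is manifestly nonnegative when $f''\ge 0$ (indeed $E(c_1)=\tfrac12(c_1-a)(f(a)+f(c_1))+\tfrac12(b-c_1)(f(c_1)+f(b))$ up to the constant, and one differentiates this one-variable function explicitly). That settles the affirmative part.

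For $N>2$, the strategy is to exhibit a point and a direction along which the restriction of $E$ is strictly concave, i.e.\ to show the Hessian is not positive semidefinite somewhere. The natural choice is a near-degenerate configuration: push two adjacent interior knots, say $c_i$ and $c_{i+1}$, very close together (possible precisely because $N>2$ gives us at least two interior knots, or one interior knot plus an endpoint to crowd against). As $c_{i+1}-c_i\to 0$, the off-diagonal Hessian entry $\tfrac12(f'(c_{i+1})-f'(c_i))$ tends to $0$, but one of the relevant diagonal entries also degenerates in a way that makes the $2\times 2$ principal minor in the $(c_i,c_{i+1})$ block negative; alternatively, one evaluates $E$ along the line $c_i=c-t$, $c_{i+1}=c+t$ for small $t$ and shows the coefficient of $t^2$ is negative. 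I would present this as an explicit two-variable computation with all other knots held fixed and generic, reducing the problem to a concrete function of $t$ whose second derivative at $t=0$ is shown to be $<0$ using strict convexity of $f$ near $c$.

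The main obstacle I anticipate is the $N>2$ direction: it is not enough to show the Hessian fails to be diagonally dominant (that does not by itself contradict positive semidefiniteness), so I must actually produce a vector $v$ with $v^\top H v<0$ at a specific configuration, and verify the sign carefully. The cleanest route is the one-parameter restriction sketched above, squeezing one subinterval to near-zero length while keeping the neighbouring knots fixed and well-separated, and then checking that the loss of the (positive, convexity-driven) boundary contribution from that shrinking interval is outweighed by a negative cross term — a computation that is routine once set up but where sign bookkeeping is the delicate point. I would also remark that the affirmative $N=2$ case and the negative $N>2$ case are consistent: with three knots there is simply no pair of interior knots to crowd together, which is exactly why the phenomenon first appears at $N=3$.
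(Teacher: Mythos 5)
Your $N=2$ argument is exactly the paper's: with a single free variable one computes $E''(c_1)=\tfrac{b-a}{2}\,f''(c_1)>0$. The problem is the $N>2$ direction. First, a computational slip: from \eqref{eq:FreeKnotError} one gets $\partial E/\partial c_j=\tfrac12\bigl(f(c_{j-1})-f(c_{j+1})+(c_{j+1}-c_{j-1})f'(c_j)\bigr)$, so the Hessian is indeed tridiagonal with diagonal entries $d_j=\tfrac12(c_{j+1}-c_{j-1})f''(c_j)>0$, but the off-diagonal entries are $\partial^2E/\partial c_j\partial c_{j+1}=-\tfrac12\bigl(f'(c_{j+1})-f'(c_j)\bigr)<0$, not positive. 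The sign itself is immaterial for definiteness, but the mechanism you build on it fails: in your squeeze limit $c_{j+1}-c_j\to 0$ (with $c_{j-1}$, $c_{j+2}$ fixed) the coupling term vanishes while both diagonal entries stay bounded away from zero, so the $(c_j,c_{j+1})$ block tends to a positive definite diagonal matrix; its $2\times 2$ minor $d_jd_{j+1}-o_j^2$ is positive there, and along your line $c_j=c-t$, $c_{j+1}=c+t$ the second derivative is $d_j+2|o_j|+d_{j+1}>0$. No negative curvature appears in that regime, so the counterexample you propose does not exist.

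The configurations that actually break convexity are the opposite ones: two interior knots far apart, for a function whose curvature varies strongly across the gap. The relevant minor is
\begin{equation*}
 d_jd_{j+1}-o_j^2=\tfrac14\Bigl[(c_{j+1}-c_{j-1})(c_{j+2}-c_j)\,f''(c_j)f''(c_{j+1})-\bigl(f'(c_{j+1})-f'(c_j)\bigr)^2\Bigr],
\end{equation*}
and for $f(x)=e^x$ with $c_{j-1}=-1$, $c_j=0$, $c_{j+1}=10$, $c_{j+2}=11$ this equals $\tfrac14\bigl[121\,e^{10}-(e^{10}-1)^2\bigr]<0$, so the Hessian is not positive semidefinite; this needs two interior knots, i.e.\ $N>2$, as required. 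This is consistent with the paper's own proof, which is not analytic at all: it takes $f=\exp$ on $[-15,15]$ and two widely spread knot quadruples $U_1,U_2$ and exhibits nonconvexity of $t\mapsto E((1-t)U_1+tU_2)$ by inspecting a plot. If you replace your squeezing limit by the wide-separation minor computation above, you obtain a cleaner and fully rigorous version of the paper's argument; as written, however, the $N>2$ half of your proof has a genuine gap.
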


\begin{proof}
 In the case of three knots, we only have one free variable, and it follows from
 \eqref{eq:FreeKnotError} that the error is given by
 \begin{multline}
   E\;(c_{1}) = \frac{1}{2}\biggl( \left(c_{1}-a\right)
   \bigl(f\left(a\right) + f\left(c_{1}\right)\bigr)\biggr.\\
   + \biggl.\left(b-c_{1}\right)
   \bigl(f\left(b\right)+f\left(c_{1}\right)\bigr) \biggr) -
   \int_a^b f\left(x\right)\diff{x}.
 \end{multline}
 Further, the second derivative of $E$ is given by
 \begin{equation}
   \frac{\partial^2}{\partial c_{1}^2}\;E\;(c_{1}) =
   \frac{b-a}{2}\frac{\partial^2}{\partial c_{1}^2}f(c_{1}) > 0
   \quad{}\forall c_{1}\in\left[a,b\right].
 \end{equation}
 In order to demonstrate that the error function is nonconvex for a higher
 number of interpolation points, it suffices to provide a counterexample. Let
 us consider the function $f(x)=\exp\left(x\right)$ on the interval
 $\left[-15,15\right]$ as well as the two knot sets
 $U_1 = \lbrace -15, 10.65, 14.65, 15\rbrace$ and
 $U_2 = \lbrace -15, -1.2, 12.5, 15\rbrace$. If $E$ were convex, then it must
 also be convex along the line in $\Real^4$ that connects $U_{1}$ and $U_{2}$
 (interpreting both knot sets as vectors in $\Real^{4}$). However, the plot
 of $E((1-t)U_{1}+tU_{2})$ with $t\in\left[0,1\right]$ depicted in
 Fig.~\ref{fig:NonConvexEnergy} displays a nonconvex behaviour.
 \qed{}
\end{proof}

%...........................................................................

\begin{figure}[hbtp]
\centering
\includegraphics[width=8cm]{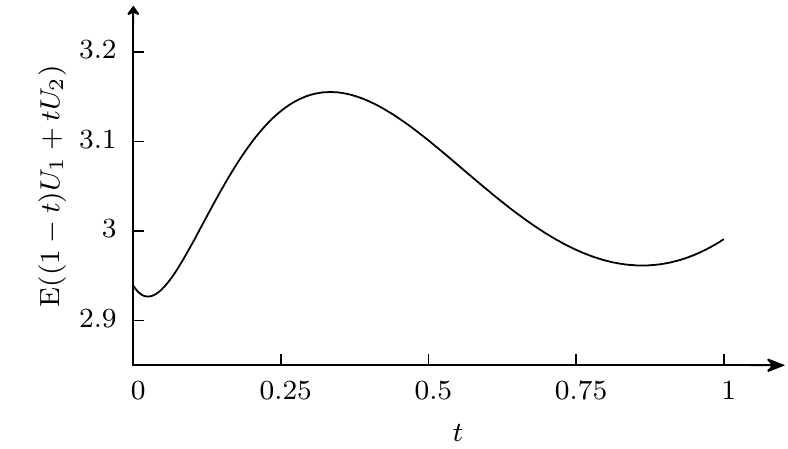}
\caption{Evolution of the error for interpolating the function $\exp(x)$
  with knots along the segment with bounds
  $U_1= ( -15, 10.65, 14.65, 15 )^{\top}$ and
  $U_2= ( -15, -1.2, 12.5, 15 )^{\top}$. It shows a nonconvex
  behaviour. For better readability, the function values have been 
  rescaled by a factor $10^{-6}$.}
\label{fig:NonConvexEnergy}
\end{figure}

%...........................................................................

We remark that the previous proposition does not claim that the energy can 
never be convex for more than three knots. Indeed, for affine functions of 
the form $mx+k$ with real coefficients $m$ and $k$ the energy is identical 
0 for any number of knots, and thus also convex. This shows that even under 
weaker conditions as in the proposition, the energy may be convex.

%...........................................................................
 \pagebreak
\subsubsection{A New Algorithm for the Free Knot Problem for Linear Spline
  Interpolation}\nopagebreak
\label{sec:energy-minim-free}

A necessary condition for a minimiser $\lbrace c_{i}^{*}\rbrace_{i=0}^{N}$ of
\eqref{eq:FreeKnotError} is
$\bm{\nabla} E\left(\lbrace c_{i}^{*}\rbrace_{i=0}^{N}\right)=0$. However, 
it follows from Proposition~\ref{th:ErrorNonConvex} that this condition is 
not sufficient. There may exist several global and/or local minima. A
simple computation leads to the following system of ${N\!-\!1}$ nonlinear 
equations in the ${N\!-\!1}$ unknowns $c_1$,...,$c_{N-1}$:
\begin{equation}
  \label{eq:FreeKnotEqSystem}
  f'\left(c_i\right) = \frac{f\left(c_{i+1}\right) -
    f\left(c_{i-1}\right)}{c_{i+1}-c_{i-1}},\quad i=1,\ldots,N-1.
\end{equation}
It should be noted that each knot only depends on its direct
neighbours. Therefore, odd indexed knots only depend on even indexed knots and
vice versa. Since $f$ is strictly convex, it follows that $f'$ is strictly
monotonically increasing. Thus, its inverse exists and is unique at every point
of the considered interval. This motivates the following iterative scheme.

%.........................................................................
\begin{samepage}
  \begin{description}
    \item[\textbf{Algorithm 1:}] \textbf{Spatial Optimisation in 1D}\hfill
          \hrule\vspace{2mm}
    \item[\emph{Input:}]\hfill\\
          $N+1$, the desired number of knots.
    \item[\emph{Initialisation:}]\hfill\\
          Choose any initial distribution $\{ c_{i}^{0}\}_{i=0}^{N}$ with $c_{0}^{0}=a$
          and $c_{N}^{0}=b$, e.g.\ a uniform distribution of the knots on the interval
          $\left[a,b\right]$.
    \item[\emph{Repeat until a fixed point is reached:}] \hfill{}
          \begin{description}
            \item[Update even knots for all possible $i$:]\hfill
                  \begin{equation}
                    c_{2i}^{k+1} \coloneqq \left(f^{\prime}\right)^{-1}
                    \left( \frac{f\left(c_{2i+1}^{k}\right) -
                        f\left(c_{2i-1}^{k}\right)}{c_{2i+1}^{k}-c_{2i-1}^{k}} \right).
                    \label{eq:FreeKnotItera}
                  \end{equation}
            \item[Update odd knots for all possible $i$:]\hfill
                  \begin{equation}
                    c_{2i+1}^{k+1} \coloneqq \left(f^{\prime}\right)^{-1}
                    \left( \frac{f\left(c_{2i+2}^{k+1}\right) -
                        f\left(c_{2i}^{k+1}\right)}{c_{2i+2}^{k+1}-c_{2i}^{k+1}} \right).
                    \label{eq:FreeKnotIterb}
                  \end{equation}
          \end{description}
    \item[\emph{Output:}]\hfill\\
          The final knot distribution $\{ c_i^*\}_{i=0}^{N}$.\\
          \vspace*{-2mm}\hrule\vspace*{\baselineskip}
  \end{description}
\end{samepage}
%.........................................................................
 
Observe that the above scheme is similar to a Red-Black Gau{\ss}--Seidel 
scheme for the solution of linear systems: We update the variables 
iteratively and use newly gained information as soon as it becomes 
available without interfering with the direct neighbours of the data point. 
An important issue is that the knots $c_{i}$ are not allowed to fall 
together. The following proposition shows that this cannot happen.

%............................................................................
 
\begin{proposition}
  The iterative scheme (\ref{eq:FreeKnotItera})--(\ref{eq:FreeKnotIterb}) 
  pre\-serves the ordering of the knot positions. Thus, we have e.g.
  \begin{equation}
    c_{i-1}^{k} < c_{i}^{k} < c_{i+1}^{k}
    \:\Rightarrow\: c_{i-1}^{k+1} < c_{i}^{k+1} < c_{i+1}^{k+1}
    \quad \forall\ k,i.
  \end{equation}
\end{proposition}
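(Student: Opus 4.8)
The plan is to reduce everything to a single ``one–step'' property of the update formula. Each interior knot is replaced by $(f')^{-1}$ applied to the secant slope of $f$ between its current left and right neighbours, so I would first establish: if $p<q$ are two points of $[a,b]$, then
\[
  p < (f')^{-1}\!\left(\frac{f(q)-f(p)}{q-p}\right) < q .
\]
Since $f$ is strictly convex and continuously differentiable, $f'$ is strictly increasing, hence invertible with $(f')^{-1}$ also strictly increasing; and the standard secant inequalities for a strictly convex differentiable function give $f'(p) < \frac{f(q)-f(p)}{q-p} < f'(q)$. Applying the strictly increasing map $(f')^{-1}$ to this chain of inequalities yields the claim. (Equivalently, the mean value theorem writes the secant slope as $f'(\xi)$ for some $\xi\in(p,q)$, and injectivity of $f'$ forces $(f')^{-1}$ of that slope to be exactly $\xi$.) This lemma is the heart of the argument and is entirely elementary.

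Next I would run an induction on the iteration index $k$, keeping careful track of the Red--Black bookkeeping. The base case $c_0^0<c_1^0<\dots<c_N^0$ holds by the choice of the initial distribution, with $c_0^k=a$ and $c_N^k=b$ fixed for all $k$. For the inductive step, assume $c_0^k<\dots<c_N^k$. In the even sweep \eqref{eq:FreeKnotItera}, each interior even knot $c_{2i}^{k+1}$ is computed from the \emph{old} neighbours $c_{2i-1}^k<c_{2i+1}^k$, so the lemma gives $c_{2i-1}^k<c_{2i}^{k+1}<c_{2i+1}^k$. From these bounds I would then read off that in the intermediate configuration (new even knots, still–old odd knots) one has $c_{2i}^{k+1}<c_{2i+1}^k<c_{2i+2}^{k+1}$ for every interior odd index, so this mixed sequence is again strictly increasing; the fixed endpoints $a$ and $b$ are covered automatically, since they occur only as outermost neighbours in these inequalities.

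Finally I would treat the odd sweep \eqref{eq:FreeKnotIterb}: each interior odd knot $c_{2i+1}^{k+1}$ is computed from the \emph{already updated} neighbours $c_{2i}^{k+1}<c_{2i+2}^{k+1}$, which are ordered by the previous step, so the lemma gives $c_{2i}^{k+1}<c_{2i+1}^{k+1}<c_{2i+2}^{k+1}$. Concatenating this with the bounds on the even knots yields $c_0^{k+1}<c_1^{k+1}<\dots<c_N^{k+1}$, which closes the induction. In particular the knots never coincide and the denominators occurring in \eqref{eq:FreeKnotItera}--\eqref{eq:FreeKnotIterb} are never zero, so the scheme is well defined throughout.

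The only genuinely delicate point — and the one I expect to be the main obstacle — is not any computation but the interleaving bookkeeping: verifying that after the even sweep the \emph{mixed} sequence (new even knots interleaved with old odd knots) is still strictly ordered, so that the one–step lemma may legitimately be re-applied in the odd sweep, and checking the role of the fixed boundary knots $c_0=a$, $c_N=b$ in both the case where $N$ is even and where $N$ is odd. Once the mixed configuration is phrased correctly, everything else is an immediate consequence of the lemma.
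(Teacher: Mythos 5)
Your proof is correct and rests on the same key observation as the paper's own proof: by the mean value theorem (equivalently, the secant-slope inequalities $f'(p)<\frac{f(q)-f(p)}{q-p}<f'(q)$ for a strictly convex differentiable $f$), the point $(f')^{-1}$ of the secant slope lies strictly between the two neighbouring knots. The paper states essentially only this one-step fact and immediately concludes; your explicit induction on $k$ and the verification that the mixed configuration (updated even knots interleaved with old odd knots) remains strictly ordered before the odd sweep is a more careful rendering of the same argument rather than a different route, and it fills in the Red--Black bookkeeping that the paper's two-sentence proof leaves implicit.
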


%............................................................................

\begin{proof}
 Since $f$ is differentiable on $\left[c_{i-1},c_{i+1}\right]$ for all $i$,
 the mean value theorem guarantees the existence of a $c_{i}$ in
 $\left(c_{i-1},c_{i+1}\right)$ such that
 \begin{equation}
   f'\left(c_i\right) =
   \frac{f\left(c_{i+1}\right)-f\left(c_{i-1}\right)}{c_{i+1}-c_{i-1}}\,.
 \end{equation}
 Thus, our iterative scheme must necessarily preserve the order of the
 knots.\qed{}
\end{proof}

%............................................................................

The next theorem shows that the iterates from our algorithm monotonically
decrease the considered energy.
\begin{theorem}
  If the function $f:\left[a,b\right]\rightarrow\Real$ is strictly convex and
  twice continuously differentiable, the iterates
  $\left( \lbrace c_{i}^{k} \rbrace_{i=0}^{N}\right)_{k}$ obtained in
  \eqref{eq:FreeKnotItera} and \eqref{eq:FreeKnotIterb} decrease the $L_1$ error
  \eqref{eq:FreeKnotEnergy} in each step, i.e.\ we have
  \begin{equation}
    E\left(\lbrace c_{i}^{k+1} \rbrace_{i=0}^{N}\right)
    \leqslant E\left(\lbrace c_{i}^{k} \rbrace_{i=0}^{N}\right)\ \forall k.
  \end{equation}
\end{theorem}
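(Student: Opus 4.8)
The plan is to show that each half-sweep of the algorithm — first the even-knot update \eqref{eq:FreeKnotItera}, then the odd-knot update \eqref{eq:FreeKnotIterb} — does not increase the energy $E$. Since $E$ is a sum of terms and, by the remark after \eqref{eq:FreeKnotEqSystem}, updating $c_{2i}$ only involves the two neighbours $c_{2i-1},c_{2i+1}$, which are held fixed during the even sweep, the even sweep decouples into independent one-variable minimisations. The same is true for the odd sweep. So it suffices to prove the following one-variable claim: if $c_{i-1}<c_{i+1}$ are fixed and we set $\tilde c_i \coloneqq (f')^{-1}\!\left(\frac{f(c_{i+1})-f(c_{i-1})}{c_{i+1}-c_{i-1}}\right)$, then $\tilde c_i$ is the global minimiser over $c_i\in(c_{i-1},c_{i+1})$ of the local energy contribution
\begin{equation*}
  e_i(c_i) \coloneqq \tfrac12\bigl(c_i-c_{i-1}\bigr)\bigl(f(c_i)+f(c_{i-1})\bigr)
         + \tfrac12\bigl(c_{i+1}-c_i\bigr)\bigl(f(c_{i+1})+f(c_i)\bigr),
\end{equation*}
which is exactly the part of the sum in \eqref{eq:FreeKnotError} that depends on $c_i$ (the $-\int_a^b f$ term and all other summands being constant in $c_i$).

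The key computation is that $e_i'(c_i) = \tfrac12\bigl(f(c_i)-f(c_{i-1})\bigr) + \tfrac12\bigl(c_i-c_{i-1}\bigr)f'(c_i) - \tfrac12\bigl(f(c_{i+1})-f(c_i)\bigr) + \tfrac12\bigl(c_{i+1}-c_i\bigr)f'(c_i)$, which simplifies to $e_i'(c_i) = \tfrac12(c_{i+1}-c_{i-1})f'(c_i) - \tfrac12\bigl(f(c_{i+1})-f(c_{i-1})\bigr)$; hence $e_i'(c_i)=0$ iff $f'(c_i) = \frac{f(c_{i+1})-f(c_{i-1})}{c_{i+1}-c_{i-1}}$, i.e.\ iff $c_i = \tilde c_i$. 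Moreover $e_i''(c_i) = \tfrac12(c_{i+1}-c_{i-1})f''(c_i) > 0$ by strict convexity of $f$, so $e_i$ is strictly convex on $(c_{i-1},c_{i+1})$ and $\tilde c_i$ is its unique global minimiser. The earlier proposition on order preservation guarantees $\tilde c_i$ lies strictly inside $(c_{i-1},c_{i+1})$, so the update is well defined. Therefore $e_i(\tilde c_i)\leqslant e_i(c_i^{\,k})$ for the old value $c_i^{\,k}$, i.e.\ the even sweep does not increase $E$; the odd sweep is handled identically, now with the just-updated even knots playing the role of the fixed neighbours, and composing the two half-sweeps gives $E(\{c_i^{k+1}\}) \leqslant E(\{c_i^{k}\})$.

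I expect the only real subtlety to be bookkeeping rather than any genuine obstacle: one must be careful that the even and odd index sets partition $\{1,\dots,N-1\}$ correctly at the endpoints (recall $c_0=a$, $c_N=b$ are frozen), so that each updated knot indeed sees only frozen-or-already-updated neighbours and the decoupling argument is airtight; and one should note that within a single sweep the minimisations over distinct knots of the same parity are over disjoint variables, so they can be performed simultaneously without interference. Strict convexity and $C^2$-regularity of $f$ are used precisely once each — convexity to get $e_i''>0$, twice differentiability to make $e_i''$ meaningful and $(f')^{-1}$ well behaved — and the computation of $e_i'$ is the elementary product-rule manipulation above, which I would present in one or two displayed lines rather than belabour.
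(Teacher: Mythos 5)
Your proof is correct and follows essentially the same route as the paper's: decouple each half-sweep into independent one-variable problems on $\left[c_{i-1},c_{i+1}\right]$ and observe that the update point is the unique minimiser of the local energy there (the paper cites the convexity of the three-knot energy from Proposition~\ref{th:ErrorNonConvex}, whereas you recompute $e_i''>0$ directly, which amounts to the same thing). One cosmetic remark: your intermediate expression for $e_i'$ has two sign typos (the product rule gives $\tfrac12\bigl(f(c_i)+f(c_{i-1})\bigr)$ and $-\tfrac12\bigl(f(c_{i+1})+f(c_i)\bigr)$, not the differences you wrote), but the simplified form of $e_i'$ and everything downstream are correct.
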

\begin{proof}
 By alternating between the update of the odd and even indexed sites, 
 the problem decouples.
 The new value of $c^{k+1}_{i}$ only depends on $c^k_{i-1}$ and
 $c^k_{i+1}$, which are fixed. Therefore, the problem is localised, and we 
 can update all the even/odd indexed knots independently of each other. It
 follows that one iteration step is equivalent to finding the optimal
 $c^{k+1}_{i}$ such that the interpolation error becomes minimal on
 $\left[c^{k}_{i-1},c^{k}_{i+1}\right]$ for all even/odd $i$. The global
 error can now be written as the sum of all the errors over the intervals
 $\left[c^{k}_{i-1},c^{k}_{i+1}\right]$ and will necessarily decrease when
 each term of this sum decreases. Proposition~\ref{th:ErrorNonConvex} shows
 that the considered energy is convex for three knots. Thus,
 $\bm{\nabla} E(\lbrace c^{k}_{j} \rbrace_{j=i-1}^{i+1}) = 0$ is not only a
 necessary, but also a sufficient condition for being a minimum on
 $\left[c^{k}_{i-1},c^{k}_{i+1}\right]$. This means that
 \eqref{eq:FreeKnotItera} will not increase the error when updating even
 indexed knots and subsequently, \eqref{eq:FreeKnotIterb} will not increase
 the error while updating the odd numbered sites. Therefore, it follows that
 the overall error cannot increase in an iteration step.\qed{}
\end{proof}
Since the error is bounded from below by 0, we also obtain the following result.
\begin{corollary}
  The sequence $\left(E\left(\lbrace c_{i}^{k} \rbrace_{i=0}^{N}\right)\right)_k$
  is convergent.
\end{corollary}
Note that the previous statements do not claim the convergence of the sequence
$\left(\lbrace c_{i}^{k} \rbrace_{i=0}^{N}\right)_{k}$. Since the problem is 
nonconvex, the global minimum of the considered energy is not necessarily 
unique.
In that case, our algorithm might alternate between several of the minimisers.
These minimisers are, from a qualitative point of view, all equivalent, since 
they yield the same error. However, they might not be the 
global minimiser. Also
note that the theorem of Bolzano-Weierstrass asserts that
$\left(\lbrace c_{i}^{k} \rbrace_{i=0}^{N}\right)_{k}$ contains at least one
convergent subsequence since all the $c_{i}^{k}$ must necessarily lie in the
interval $\left[a,b\right]$. Finally, we remark that in our test cases the
results were always of very good quality. This gives rise to the conjecture 
that the found knot distributions are close to a global minimum.
 
%............................................................................

\subsubsection{Numerical Experiments}
\label{sec:numer-exper-Interp}

Let us now perform experiments with our new algorithm. We consider the convex
function
\begin{equation*}
  f\left(x\right)=\exp{\left(2x-3\right)}+x
\end{equation*}
on the interval $\left[-4,4\right]$. Figure~\ref{fig:ErrorP1} depicts the
evolution of the error, while Fig.~\ref{fig:PositionP1} exhibits the resulting
distribution of the knots. The experiments were done with a randomised initial
distribution of the knots and $5000$ iterations. Interestingly, the iterates
always converged already after very few iterations. In accordance with the
theory from the previous section, we also note that the error is monotonically
decreasing, both with respect to the number of knots and with the number of
iterations. Another interesting observation is the influence of an additionally
introduced knot on the other knots: Adding further interpolation sites has a
global influence on all the other knots. Moreover, we encounter a higher knot
density in regions with large curvature than in flat regions. This is in
agreement with the theory of Belhachmi {\em et al.}~\cite{BBBW08}.
 
%...........................................................................

\begin{figure}[hbtp]
  \centering
  \includegraphics[width=0.7\linewidth]{./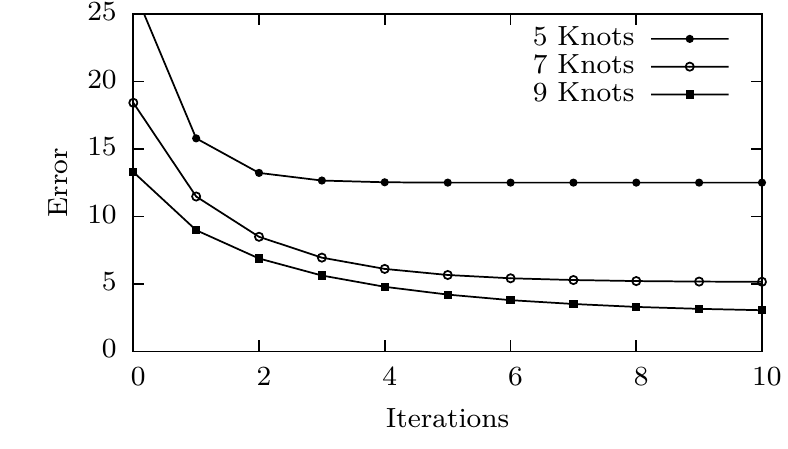}
  \caption{Evolution of the $L_1$ error as a function of the iterates for
    different numbers of knots for the function
    $f\left(x\right)=\exp{\left(2x-3\right)}+x$ on the interval
    $\left[-4,4\right]$. Note that the error is decreasing both with respect to
    the number of knots and with respect to the number of iterations.
    Furthermore, the curves suggest a rather fast convergence to
    a stationary energy value.}
  \label{fig:ErrorP1}
\end{figure}

%...........................................................................
 
\begin{figure}
  \centering
  \includegraphics[width=0.7\linewidth]{./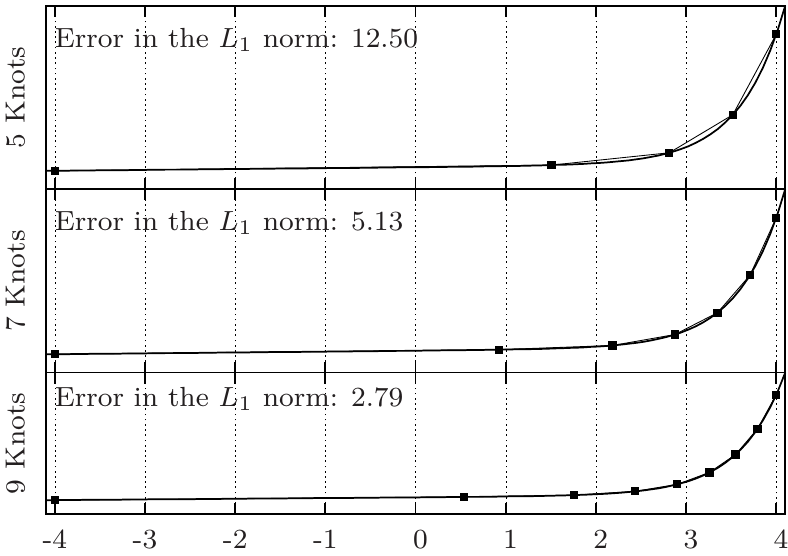}
  \caption{Distribution of the knots corresponding to Figure~\ref{fig:ErrorP1}
    for the converged state of the function $f(x)=\exp(2x-3)+x$. Note that there
    are almost no knots in flat regions, whereas there is a high density in
    regions with large curvature. Changing the number of knots actually
    influences the position of all the knots.}
  \label{fig:PositionP1}
\end{figure}

%---------------------------------------------------------------------------- 

\subsection{Optimal Knots for Approximating Convex Functions}
\label{sec:optim-knots-appr}

In the previous section, we have seen how to optimise the {\em location} of 
the interpolation data. A next step would be to investigate how much an 
optimisation of the {\em grey value data} at these sites could further 
improve the result. To do so, we no longer require that the function value 
and the value of the interpolant must coincide at the knot locations. Instead, 
we consider an approximation problem and require that the overall 
reconstruction minimises the $L_{1}$ error on the considered domain. As in 
the previous section, we again use piecewise linear splines. Such 
approximations by means of first degree splines have a long history, and 
there exist results for many special cases. In \cite{S1961}, the best 
approximation of strictly convex functions in the least squares sense has
been analysed, while \cite{D1963} cites general conditions for the determination
of best approximations of strictly convex functions in the $L_{\infty}$ sense.
Theoretical results can also be found in \cite{J1978}. In \cite{NB1982}, it is
shown that an optimal approximation of convex functions with splines is not
necessarily unique, a problem which we already mentioned in the stricter case of
convex spline interpolation. Finally, \cite{C1971,P1968} supply algorithms for
determining such approximations.

In order to minimise the $L_{1}$ error between a strictly convex function
$f:\left[a,b\right]\to\Real$ and a piecewise linear spline in an approximation
setting, we use an algorithm by Hamideh \cite{H2002}. It relies on
a classical result from approximation theory:
\begin{theorem}
  \label{th:BestLocalPoints}
  For any function $f\in C\left(\left[a,b\right]\right)$, which is strictly
  convex in $\left[a,b\right]$, the optimal straight line approximation to $f$
  in the $L_{1}$ sense on $\left[a,b\right]$ interpolates the function at the
  points
  \begin{equation}
    \xi_1 \coloneqq \frac{3}{4}a+\frac{1}{4}b\quad\text{and}\quad\xi_2
    \coloneqq \frac{1}{4}a+\frac{3}{4}b\,.
  \end{equation}
\end{theorem}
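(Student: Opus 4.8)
The plan is to reduce the statement to a one-dimensional optimisation over a single parameter. Since we seek the best straight-line approximation on $[a,b]$ in the $L_1$ sense, I would first parametrise a generic line $\ell(x) = \alpha x + \beta$ and note that, because $f$ is strictly convex, the difference $\ell - f$ is strictly concave. Hence $\ell - f$ has at most two zeros in $[a,b]$; for the approximation error $\int_a^b |\ell(x) - f(x)|\diff{x}$ to be finite and minimal it is classical (and easy to argue by a perturbation of $\beta$) that the optimal line must in fact cross $f$ exactly twice, say at points $\xi_1 < \xi_2$ in $(a,b)$, with $\ell \geqslant f$ on $[\xi_1,\xi_2]$ and $\ell \leqslant f$ outside. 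This turns the problem into choosing the two interpolation abscissae $\xi_1, \xi_2$, since once they are fixed the line is determined by the interpolation conditions $\ell(\xi_i) = f(\xi_i)$.

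Next I would write the error as a function of $(\xi_1,\xi_2)$ and split the integral at $\xi_1$ and $\xi_2$:
\begin{equation*}
  E(\xi_1,\xi_2) = \int_a^{\xi_1}(f-\ell) + \int_{\xi_1}^{\xi_2}(\ell-f)
    + \int_{\xi_2}^{b}(f-\ell).
\end{equation*}
A cleaner route, which avoids carrying the explicit form of $\ell$ through three integrals, is to use a classical trick: the optimal $L_1$ approximation is characterised by the sign pattern of the error together with the orthogonality-type condition that the sign function of $\ell - f$ be ``$L_1$-orthogonal'' to the approximating subspace. Concretely, for best $L_1$ approximation from the two-dimensional space $\operatorname{span}\{1, x\}$, the error $e = \ell - f$ must satisfy $\int_a^b \operatorname{sgn}(e(x))\, p(x)\diff{x} = 0$ for every $p \in \operatorname{span}\{1,x\}$, i.e.\ for $p \equiv 1$ and $p(x) = x$. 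With the sign pattern $-,+,-$ on the three subintervals determined by $\xi_1,\xi_2$, these two conditions become
\begin{align*}
  -(\xi_1 - a) + (\xi_2 - \xi_1) - (b - \xi_2) &= 0,\\
  -\tfrac{1}{2}(\xi_1^2 - a^2) + \tfrac{1}{2}(\xi_2^2 - \xi_1^2)
    - \tfrac{1}{2}(b^2 - \xi_2^2) &= 0,
\end{align*}
a pair of equations that does not involve $f$ at all.

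Solving this linear-in-disguise system is then routine: the first equation gives $\xi_1 + \xi_2 = \tfrac{a+b}{2} + \xi_2 - \xi_1$, which after simplification reads $2\xi_1 + 2\xi_2 = a + b + 2(\xi_2 - \xi_1)$, wait---more directly, the first equation is $2\xi_2 - 2\xi_1 = a + b - \dots$; collecting terms yields $\xi_1 + \xi_2 = \tfrac{a+b}{2} + (\xi_2-\xi_1)/1$, so I would instead just solve the two equations simultaneously for the unknowns $s := \xi_1 + \xi_2$ and $d := \xi_2 - \xi_1$. The second equation, being a difference of squares, factors through $s$ and $d$ as well, and one obtains $s = a+b$ forced by symmetry considerations together with $d = \tfrac{b-a}{2}$, giving $\xi_1 = \tfrac{3}{4}a + \tfrac{1}{4}b$ and $\xi_2 = \tfrac{1}{4}a + \tfrac{3}{4}b$, exactly the claimed points.

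The main obstacle I anticipate is not the algebra but the justification of the sign pattern and the optimality characterisation: one must argue (i) that an $L_1$-optimal line exists, (ii) that it genuinely crosses $f$ twice rather than being tangent or lying entirely on one side --- this is where strict convexity is essential, since a line lying weakly above or below $f$ could be improved by shifting $\beta$ --- and (iii) that the $L_1$ best-approximation characterisation via $\int \operatorname{sgn}(e)\, p = 0$ applies here; this requires that $e$ vanishes only on a null set, which again follows from strict convexity (the concave function $e = \ell - f$ cannot be constant on an interval). Once these qualitative facts are in place, the computation pins down $\xi_1,\xi_2$ uniquely. Since the theorem is attributed to the classical literature (it is the enabling result behind Hamideh's algorithm \cite{H2002}), I would present the perturbation argument for the sign pattern in one paragraph and then carry out the two-line moment computation, citing \cite{R1964} for the underlying $L_1$-approximation theory.
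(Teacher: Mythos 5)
Your argument is correct, and it is worth noting that the paper itself does not supply a proof at all: it merely asserts that the result ``can be verified by direct computation'' and defers to Rice \cite{R1964}. What you have written out is essentially that classical computation. The moment conditions you derive are right: with the sign pattern $-,+,-$ the condition $\int_a^b \operatorname{sgn}(e)\,\diff{x}=0$ gives $\xi_2-\xi_1=\tfrac{b-a}{2}$, the condition $\int_a^b \operatorname{sgn}(e)\,x\,\diff{x}=0$ gives $(\xi_2-\xi_1)(\xi_2+\xi_1)=\tfrac{(b-a)(b+a)}{2}$, hence $\xi_1+\xi_2=a+b$, and these two linear relations yield exactly $\xi_1=\tfrac{3a+b}{4}$, $\xi_2=\tfrac{a+3b}{4}$. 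Your reduction to the sign-orthogonality characterisation is the standard and cleanest route, and your identification of where strict convexity enters (the error $e=\ell-f$ is strictly concave, so it vanishes on a null set and has at most two zeros) is exactly the right justification for applying that characterisation.

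Three points to tighten before this becomes a finished proof. First, the middle paragraph containing ``wait---more directly'' is a visible artefact of your scratch work and must be replaced by the two-line solution of the system in $s=\xi_1+\xi_2$ and $d=\xi_2-\xi_1$; as written it does not parse as mathematics. Second, your perturbation argument only rules out the case where $e$ has constant sign; you should also dispose of the case of exactly one sign change at some $\xi$ (and of tangency at an endpoint), which is easiest via the same two moment conditions: a single crossing would force $\xi=\tfrac{a+b}{2}$ and $\xi^2=\tfrac{a^2+b^2}{2}$ simultaneously, which is impossible for $a<b$. Third, the computation as you present it establishes necessity of the interpolation points for an optimal line with the assumed sign pattern; to conclude you should either invoke sufficiency of the sign-orthogonality condition for best $L_1$ approximation (valid here precisely because $e$ vanishes only on a null set) or combine necessity with an existence argument for the minimiser, which you do gesture at. None of these is a substantive obstacle, and with them filled in your proof is complete and more informative than the paper's citation.
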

\begin{proof}
 This result can be verified by direct computation. Alternatively, a
 detailed proof can also be found in \cite{R1964}.
 \qed
\end{proof}
This gives rise to the following algorithm.
  \begin{description}
    \item[\textbf{Algorithm 2:}]\textbf{Optimal Approximation in 1D}\hfill
          \hrule\vspace{2mm}
    \item[\emph{Input:}]\hfill\\
          $N+1$ the number of desired knots.
    \item[\emph{Initialisation:}]\hfill\\
          Choose an arbitrary distribution of $N+1$ knots with $c_0\coloneqq a$ and
          $c_{N}\coloneqq b$.
    \item[\emph{Compute:}]\hfill\\
          Repeat these steps until a fixed point is reached.
          \begin{enumerate}
            \item On each subinterval $\left[c_{i-1},c_{i}\right]$, define the points
                  \begin{equation}
                    \label{eq:LocallyOptPoints}
                    \xi_{i,1} \coloneqq  \frac{3c_{i-1}+c_{i}}{4},\qquad \xi_{i,2}
                    \coloneqq  \frac{c_{i-1}+3c_{i}}{4}
                  \end{equation}
                  as well as the corresponding line $\ell_{i-1}$ passing through these 
                  points:
                  \begin{equation}
                    \ell_{i-1}\left(x\right) \coloneqq \frac{f\left(\xi_{i,2}\right) -
                      f\left(\xi_{i,1}\right)}{\xi_{i,2}-\xi_{i,1}}\left(x-\xi_{i,1}\right) +
                    f\left(\xi_{i,1}\right)
                  \end{equation}
            \item Determine for all $i$ the new knot position $c_i$ by intersecting the
                  lines $\ell_{i-1}$ and $\ell_{i}$, e.g.\ solve
                  \begin{equation}
                    \ell_{i-1}\left(c_i\right) = \ell_{i}\left(c_i\right)
                  \end{equation}
                  for $c_{i}$.
          \end{enumerate}
    \item[\emph{Output:}]\hfill\\
          The final knot distribution $\{c_i\}_{i=0}^N$.\\
          \vspace*{-2mm}\hrule\vspace*{\baselineskip}
  \end{description}
The algorithm of Hamideh is similar to our Algorithm~1 for interpolation: 
Both use iteratively locally optimal solutions to perform a global 
optimisation. In addition, the following properties are shown in 
\cite{H2002}:
\begin{enumerate}
\item The resulting sequence of $L_{1}$ errors
$\left(E(\{c_i^{k}\}_{i=0}^N)\right)_{k}$ is convergent.
\item For all $i=1,\ldots,N-1$, it holds that
  \begin{subequations}
    \begin{align}
      \liminf_{k\rightarrow\infty} &\left|c_{i+1}^{k} - c_{i}^k\right| > 0, \\
      \lim_{k\rightarrow\infty} &\left|c_{i}^{k+1}-c_i^{k}\right|=0,
    \end{align}
  \end{subequations}
  where $\left(\lbrace c_{i}^{k}\rbrace_{i=0}^{N}\right)_{k}$ is our sequence of
  knot sets. Thus, two distinct knots cannot fall together.
\item If the unknown optimal spline fulfils certain continuity conditions, one
  can guarantee that the above algorithm convergences towards an optimal
  solution.
\end{enumerate}
Further analytic results on the optimal knot distribution can also be found in
\cite{KS1978}.

%............................................................................

\subsubsection{Numerical Experiments}
\label{sec:numer-exper-Approx}

We investigate the knot distribution for different number of knots and repeat
the experiments of Section~\ref{sec:numer-exper-Interp} using now the algorithm
of Hamideh. Our main interest lies in the performance of a combined optimisation
over a sequential optimisation of the position and the corresponding value of
the interpolation data. To this end, we also consider a tonal optimisation 
that adjusts the function values for our free knot result. This additional 
task is carried out as a postprocessing step. Since the position of the mask 
values is optimised in an $L_{1}$ setting, we use the same framework for the 
tonal optimisation, too. Due to the fact that for a fixed set of knots, we 
can express our interpolating spline as a linear combination of first 
degree B-Splines, we can express the tonal optimisation as a linear 
regression task with respect to the $L_{1}$ norm. It is well-known that 
such problems can be reduced to linear programs which can efficiently be 
solved by standard solvers from the literature. Our findings are 
summarised in Tab.~\ref{tab:ErrorsFreeKnot}, and a visual comparison 
between the obtained mask sets for a set of seven knots is given in 
Fig.~\ref{fig:HamidehReconst}.

%...........................................................................
 
\begin{table}
\centering
\caption[Errors for optimal mask interpolation and approximation]
  {Error measures for our interpolation algorithm and the approximation 
   algorithm of Hamideh for different numbers of mask points applied to 
   the function $x\mapsto\exp(2x-3)+x$ on the interval $\left[-4,4 \right]$. 
   For our method we list the error without additional tonal optimisation 
   and with additional tonal optimisation.}
\begin{tabular*}{0.6\linewidth}{@{\extracolsep{\fill}}lrrr@{\extracolsep{\fill}}}
 \toprule\addlinespace
 \multirow{2}{*}[-1mm]{\centering$N+1$} & 
 \multicolumn{2}{c}{Our method} & 
 \multirow{2}{*}[-1mm]{Hamideh} \\ 
   \cmidrule{2-3} & 
   \multicolumn{1}{c}{no tonal optim.} & 
   \multicolumn{1}{c}{with tonal optim.} &  \\ 
 \addlinespace \cmidrule{1-1} 
 \cmidrule{2-2} 
 \cmidrule{3-3} 
 \cmidrule{4-4} 
 \addlinespace
    5    & 12.501   & 4.229   & 3.982 \\
    7    &  5.134   & 1.810   & 1.748 \\
    9    &  2.785   & 0.999   & 0.977 \\ 
 \addlinespace \bottomrule
\end{tabular*}
\label{tab:ErrorsFreeKnot}
\end{table}

%............................................................................
 
Although the knot distribution in Fig.~\ref{fig:HamidehReconst} for the approach
of Hamideh is similar to the one found with interpolation, the corresponding
errors in Tab.~\ref{tab:ErrorsFreeKnot} are significantly lower, since the
reconstruction can adapt much better to the original function when compared to
the interpolation framework. Nevertheless, we also observe a substantial gain
of the tonal optimisation. Even though we cannot outperform the combined
optimisation, we achieve competitive results, in particular for larger values
of $N$.\par 

%............................................................................

\begin{figure}[btp]
\centering
\includegraphics[width=0.9\linewidth]{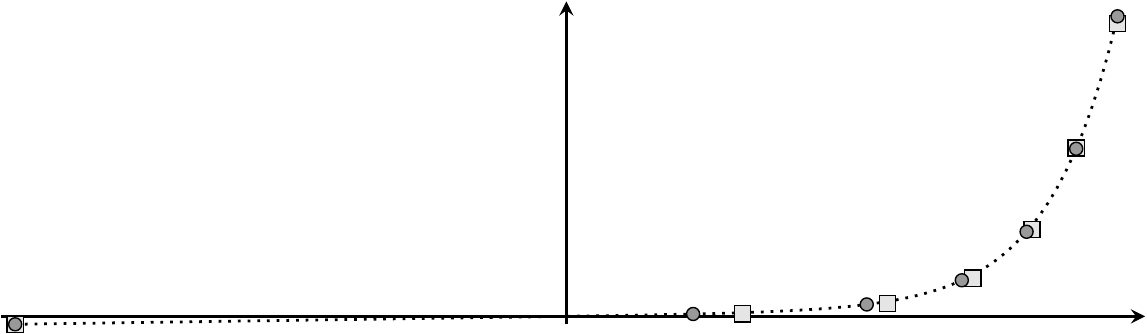}
\caption{
  Comparison between the knots found with our method (dark grey disks)
  and the method of Hamideh (light grey squares) for the function
  $\exp(2x-3)+x$ along the interval $[-4,4]$ (dotted line).}
\label{fig:HamidehReconst}
\end{figure}

%............................................................................
 
Our investigations in the 1D case show that a careful optimisation of the
positions (spatial information) and greyvalues (tonal information) of the data
can lead to significant improvements compared to a purely spatial tuning.
Moreover, a sequential optimisation is almost as good as a combined 
optimisation of the data. The next step in our strategy will be to adapt 
these ideas to the two-dimensional setting such that we can efficiently 
apply them on discrete image data.

%%%%%%%%%%%%%%%%%%%%%%%%%%%%%%%%%%%%%%%%%%%%%%%%%%%%%%%%%%%%%%%%%%%%%%%%%%%%%

\section{Optimisation Strategies in 2D}
\label{sec:2-d-optimisation}
 
Unfortunately, our interpolation algorithm from
Section~\ref{sec:1-d-optimisation} can hardly be used directly on 2D image data.
First of all, we would be restricted to convex/concave images. Of course, one
could always segment an arbitrary image into convex and concave regions and
treat them separately, but in many cases this would lead to heavily
oversegmented images and a suboptimal global distribution of the data points
for the reconstruction. Secondly, the solution of 
\eqref{eq:ci1}--\eqref{eq:ci2} in higher dimensions cannot be written 
in terms of simple piecewise linear interpolation: To characterise it
analytically would require more complicated expressions that involve
Green's functions \cite{HPW15}. 
Therefore, we want to consider other approaches here. Nevertheless, they 
exploit the basic ideas and findings of a spatial and tonal optimisation 
from the previous section.

For practical reasons, we present a two-step optimisation strategy: First
we consider an interpolation approach to optimise the spatial data.
Afterwards we optimise the tonal information at the points obtained in the
first step. This strategy can be justified by the fact that in the 1D case, 
the obtained knot distributions for the interpolation and approximation 
algorithms were similar. For the optimisation of the data sites we investigate 
two methods: An analytic approach proposed in \cite{BBBW08} that exploits 
the theory of shape optimisation, and our probabilistic sparsification 
approach from \cite{MHWT12}. Finally we complement our pure interpolation 
framework with a best approximation scheme that incorporates tonal 
optimisation in our model.

%---------------------------------------------------------------------------- 
\pagebreak
\subsection{Optimising Spatial Data}\nopagebreak
\label{sec:optim-spat-data}

%............................................................................

\subsubsection{Analytic Approach}
\label{sec:analytic-approach}

In order to approach the question about the optimal data selection, Belhachmi 
{\em et al.}~\cite{BBBW08} use the mathematical theory of shape 
optimisation. This theory optimises topological properties of given objects. 
Belhachmi {\em et al.}~seek the optimal shape of the set of Dirichlet 
boundary data in \eqref{eq:icd1}--\eqref{eq:icd3}. 
They show that the density of the data points
should be chosen as an increasing function of the Laplacian magnitude of the
original image. However, this optimality result returns a continuous density
function rather than a discrete pixel mask. This yields an additional problem
that is also discussed in \cite{BBBW08}, namely, how to obtain the best
discrete (binary) approximation to a continuous density function. Belhachmi 
{\em et al.}~suggest the following strategy to obtain a point mask based upon 
$|\Delta f|$.
First one applies a small amount of Gaussian presmoothing with standard
deviation $\sigma$ to obtain $f_\sigma$. This is a common procedure in image
analysis to ensure the differentiability of the data. Then one computes the
Laplacian magnitude $|\Delta f_\sigma|$ and rescales it such that its mean
represents the desired point density given as fraction $d$ of all pixels.
Finally any dithering algorithm that preserves the average grey value can be
applied to obtain the binary point mask.

In \cite{BBBW08} the classical error diffusion method of Floyd and
Steinberg~\cite{FS76} is used. However, we favour the more sophisticated
electrostatic halftoning~\cite{SGBW10} over simpler dithering approaches, since
it has proven to yield very good results for discretising a continuous
distribution function.
Figure~\ref{fig:cmp_dither} shows the superiority of
electrostatic halftoning over Floyd-Steinberg dithering in terms of the mean 
squared error (MSE)
\begin{equation}
  \mse{\bm{u}} = \frac{1}{|J|}\sum_{i \in J} (f_i - u_i)^2,
  \label{eq:defmse}
\end{equation}
where $\bm{u}$ denotes the reconstruction, $\bm{f}$ the original image and 
$J$ the set of all pixel indices. Since any dithering method introduces 
errors, it remains an open question if this is the most suitable approach 
to discretise the continuous optimality result.

%..........................................................................

\begin{figure*}
  \centering
  \begin{minipage}{0.235\textwidth}
    \centering
    \includegraphics[width=\textwidth]{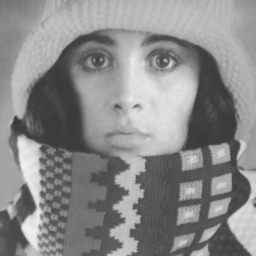}\\
    \textbf{(a)}
  \end{minipage}\hfill
  \begin{minipage}{0.235\textwidth}
    \centering
    \includegraphics[width=\textwidth]{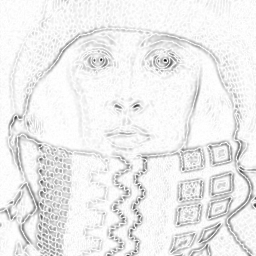}\\
    \textbf{(b)}
  \end{minipage}\hfill
  \begin{minipage}
    {0.235\textwidth}
    \centering
    \includegraphics[width=\textwidth]{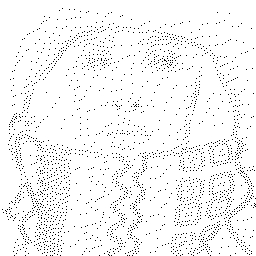}\\
    \textbf{(c)}
  \end{minipage}\hfill
  \begin{minipage}{0.235\textwidth}
    \centering
    \includegraphics[width=\textwidth]{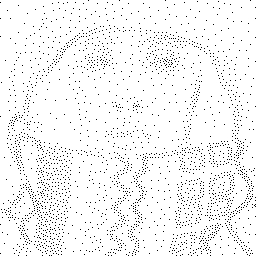}\\
    \textbf{(d)}
  \end{minipage}
  \caption{ 
  \textbf{(a)} Original test image \emph{trui} ($256\times256$ pixels). 
  \textbf{(b)} Smoothed Laplacian magnitude of (a) using $\sigma=1$
               (rescaled and inverted). 
  \textbf{(c, d)} Dithered versions of (b) using Floyd-Steinberg error 
               diffusion and electrostatic halftoning, respectively.
               With (c) and (d) as mask for homogeneous diffusion inpainting 
               we obtain an MSE of $138.98$ and $101.14$, respectively. 
  All the images use floating point values in the range from 0 to 255 for 
  the pixels. The discrete masks have a density of 4\%.}
  \label{fig:cmp_dither}
\end{figure*}

%..........................................................................

The theory of Belhachmi {\em et al.}~\cite{BBBW08} demands the data 
points to be chosen as an increasing function of $|\Delta f|$. However, 
the optimal
increasing function depends on the details of the underlying model. One option 
is to use the identity function of the Laplacian magnitude, as was done in 
\cite{MHWT12}. In the present work, we introduce an additional parameter 
$s>0$ and dither $|\Delta f_{\sigma}|^{s}$ instead. 
This choice can also be motivated from the original paper \cite{BBBW08} and 
allows to tune the density of the selected points in homogeneous regions. 
The complete method, which we call the analytic approach, is summarised below.
\pagebreak
\begin{samepage}
  \begin{description}
    \item[\textbf{Algorithm 3:}]\textbf{Analytic Approach}\hfill
          \hrule\vspace{2mm}\nopagebreak
    \item[\emph{Input:}]\hfill\\
          Original image $f$, Gaussian standard deviation $\sigma$, exponent $s$, desired pixel density $d$.
    \item[\emph{Compute:}]\hfill
          \begin{enumerate}
            \item Perform Gaussian presmoothing with standard deviation $\sigma$: $f_\sigma = K_\sigma \ast f$.
            \item Compute $|\Delta f_\sigma|^s$.
            \item Rescale $|\Delta f_\sigma|^s$ to
                  \begin{equation*}
                    \frac{d\cdot f_{\max}}{\mean(|\Delta f_\sigma|^s)}\cdot |\Delta f_\sigma|^s
                  \end{equation*}
                  where $f_{\max}$ is the maximal possible grey value.
            \item Apply electrostatic halftoning to obtain $\bm{c}$.
          \end{enumerate}
    \item[\emph{Output:}]\hfill\\
          Discrete pixel mask $\bm{c}$.\\
          \vspace*{-2mm}\hrule
  \end{description}
\end{samepage}
%............................................................................

\begin{figure*}
\centering
 \begin{minipage}{0.235\textwidth}
   \centering
   \includegraphics[width=\textwidth]{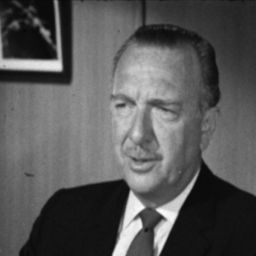}\\
   \textbf{(a)}
 \end{minipage}\hfill
 \begin{minipage}{0.235\textwidth}
   \centering
   \includegraphics[width=\textwidth]{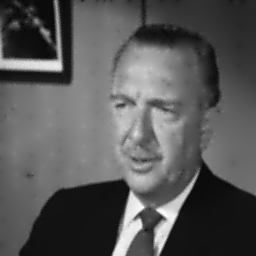}\\
   \textbf{(b)}
 \end{minipage}\hfill
 \begin{minipage}{0.235\textwidth}
   \centering
   \includegraphics[width=\textwidth]{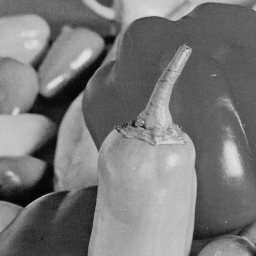}\\
   \textbf{(c)}
 \end{minipage}\hfill
 \begin{minipage}{0.235\textwidth}
   \centering
   \includegraphics[width=\textwidth]{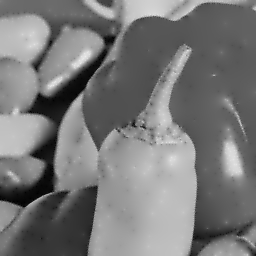}\\
   \textbf{(d)}
 \end{minipage}
\caption{
 \textbf{(a, c)} Original test images \emph{walter} ($256\times256$
   pixels) and \emph{peppers} ($256\times256$ pixels). 
 \textbf{(b, d)} Best reconstruction results with 4\% of all pixels, 
   using probabilistic sparsification, nonlocal pixel exchange and grey 
   value optimisation. The image (b) has an MSE of 12.45, while image (d) 
   has an MSE of 25.10.}
\label{fig:resultswalterpeppers}
\end{figure*}

%............................................................................

In order to evaluate the analytic approach, let us apply it on the test image
\emph{trui} (see Fig.~\ref{fig:cmp_dither}(a)). We aim at a mask pixel density
of 4\% ($d=0.04$) of all pixels. The parameters $\sigma$ and $s$ are chosen 
such that the MSE of the reconstruction becomes minimal. This was achieved with 
$\sigma=1.6$ and $s=0.8$. For comparison, we also consider two masks with the 
same amount of pixels: a mask with points on a regular grid and a randomly 
sampled mask. Figure~\ref{fig:results} shows these two masks and the one
created with the analytic approach in the first column as well as the
corresponding reconstructions in the second column. We observe that the
reconstruction quality highly benefits from a dedicated point selection. These
results are confirmed by the Columns 3, 4 and 5 of Tab.~\ref{tab:results}, 
which depicts quantitative results for several test images from 
Fig.~\ref{fig:cmp_dither}(a), Fig.~\ref{fig:resultswalterpeppers}(a), 
and Fig.~\ref{fig:resultswalterpeppers}(c).

As already mentioned, the analytic approach is real-time capable if a fast 
dithering method is used. However, rather than on speed, the focus 
of our present work is to maximise the quality of the reconstruction. 
Therefore, in the next subsection we present an alternative algorithm that 
is slower but outperforms the analytic approach in terms of quality.
 
%............................................................................

\subsubsection{Probabilistic Sparsification}
\label{sec:prob-spars}
 
We have seen that the analytic approach offers a clean strategy how to 
choose optimal spatial data in a continuous image. However, due to certain 
degrees of freedom in the modelling, errors caused by the dithering 
algorithm, and discretisation effects, its results on digital images 
cannot be optimal.
As an alternative, we consider now a discretise--then--optimise strategy. 
This way we can directly search for a binary-valued mask $\bm{c}$ by
working with the discrete inpainting formulation from \eqref{eq:discinp}. An
immediate consequence of this approach is that there are only finitely many
combinations for $\bm{c}$. Unfortunately, already for an image of size
$256{\times}256$ pixels and a desired pixel density of 4\% there are
$\binom{65536}{2621} \approx 2 \cdot 10^{4777}$ possible masks. To tackle
this combinatorial problem we suggest a method called probabilistic
sparsification. It uses a greedy strategy to reduce the search space.

Let $\bm{f}$ be a given discrete image and let $\inp{\bm{f}}$ be the function
that computes the solution $\bm{u}$ of the discrete homogeneous inpainting
process \eqref{eq:discinp2} with a mask $\bm{c}$:
\begin{equation}
  \inp{\bm{f}} \coloneqq \bm{u} =
  \left(\bm{C}-(\bm{I}-\bm{C})\bm{A}\right)^{-1} \bm{C} \bm{f}.
  \label{eq:inpainting_function}
\end{equation}
The goal is to find the pixel mask $\bm{c}$ that selects a given fraction
$d$ of all pixels and minimises $\mse{\bm{u}}$.

Starting with a full mask, where every pixel is chosen, probabilistic
sparsification iteratively removes the least significant mask pixels until a
desired density is reached. More specifically, we randomly choose a fraction $p$
of candidate pixels from the current mask. These pixels are removed from the
mask, and an inpainting reconstruction is calculated. The significance of a
candidate pixel can then be estimated by computing the local error, i.e.\ the
squared grey value difference of the inpainted and original image in this pixel.
Then we permanently remove the fraction $q$ of the candidates that exhibit the
smallest local error from the mask, and we insert back again the remaining 
fraction $(1-q)$ of the candidates. A detailed description of our algorithm 
is given below.

%..........................................................................

\begin{samepage}
  \begin{description}
    \item[\textbf{Algorithm 4:}] \textbf{Stochastic Sparsification}\hfill
          \hrule\vspace{2mm}
    \item[\emph{Input:}]\hfill\\
          Original image $\bm{f}$, fraction $p$ of mask pixels used as candidates,
          fraction $q$ of candidate pixels that are removed in each iteration, desired
          pixel density $d$.
    \item[\emph{Initialisation:}]\hfill\\
          $\bm{C} \coloneqq \diag\left( 1,\dots,1 \right)^\top$ and $K\coloneqq J$.
    \item[\emph{Compute:}]\hfill\\
          Do 
          \begin{enumerate}
            \item Choose randomly a candidate set $T$ of $p\cdot |K|$ pixel indices 
                  from $K$.
            \item For all $i \in T$ set $c_i \coloneqq 0$.
            \item Compute $\bm{u} \coloneqq \inp{\bm{f}}$.
            \item For all $i\in{}T$ compute the error $e_i=(u_i-f_i)^2$.
            \item For all $i$ of the $(1-q) \cdot |T|$ largest values of
                  $\{e_i\ |\ i \in T\}$ reassign $c_i\coloneqq 1$.
            \item Remove the indices $i\not\in T$ from $K$ and clear $T$.
          \end{enumerate}
          while $|K| > d \cdot |J|$.
    \item[\emph{Output:}]\hfill\\
          Pixel mask $\bm{c}$, such that $\sum_{i\in J} c_i = d \cdot |J|$\\
          \vspace*{-2mm}\hrule
  \end{description}
\end{samepage}
%..........................................................................

The larger the parameters $p$ and $q$ are chosen, the faster the algorithm
converges, since in each step, $p\cdot q \cdot |K|$ pixels are removed. After 
$k$ steps there are $(1-pq)^k\cdot|J|$ mask pixels left. Hence, for a 
density $d$, the algorithm terminates after at most
$\left\lceil\frac{\ln{d}}{\ln{(1-pq)}}\right\rceil$ iterations, where
$\lceil\cdot\rceil$ denotes the ceiling function, giving the smallest 
integer not less than its argument.\par
Because there is a global interdependence between all selected mask pixels,
probabilistic sparsification cannot guarantee to give optimal solutions.
Therefore, the question arises how the parameters $p$ and $q$ influence 
the quality of the resulting mask. To this end, we run several experiments
with different $p$ and $q$. The results are depicted in Tab.~\ref{tab:pqtest}.
Note that we set the candidate set as well as the set of pixels that are 
removed to $1$ if $p$ or $q$ would lead to sets smaller than one pixel.
The optimal $p$ is usually not very large: If too many candidates are 
removed from the mask, the local error does not provide enough information 
to select good pixels to remove permanently. The parameter $q$ can usually 
be chosen as small as possible, i.e.\ such that only one
candidate is removed in each iteration. For our test images, this was the
case for $q=10^{-6}$. With larger values for $q$, the probability that we 
remove important pixels increases.
 
%............................................................................

\begin{table*}
\centering
\caption{
 Influence of the parameters $p$ and $q$ of probabilistic sparsification. 
 There have been in total $100$ runs for each pair $(p, q)$ on the test 
 image \emph{trui} with desired pixel density $d=0.04$. Numbers in the 
 table are the mean and standard deviation of the MSE. Again all pixel 
 values lie in the interval $\left[0,255\right]$.}
\resizebox{\textwidth}{!}{
\begin{tabular}{rccccccc}\toprule
 \multirow{2}{*}{$q$\hspace*{5mm}} & \multicolumn{7}{c}{$p$}\\ 
 \cmidrule(lr){2-8} & 0.01 & 0.02 & 0.05 & 0.1 & 0.2 & 0.3 & 0.4 \\ 
 \cmidrule(r){1-1} 
 \cmidrule(lr){2-2} 
 \cmidrule(lr){3-3} 
 \cmidrule(lr){4-4} 
 \cmidrule(lr){5-5} 
 \cmidrule(lr){6-6} 
 \cmidrule(lr){7-7} 
 \cmidrule(lr){8-8}
 \addlinespace
 $10^{-6}$ & 
   103.7 $\pm$ 1.88 & 
   \phantom{0}98.2 $\pm$ 1.72 & 
   \phantom{0}87.9 $\pm$ 1.61 & 
   77.6 $\pm$ 1.40 & 
   67.7 $\pm$ 1.40 & 
   \textbf{66.1 $\pm$ 1.36} & 
   70.7 $\pm$ 1.76 \\
 $10^{-3}$ & 
   103.7 $\pm$ 1.77 & 
   \phantom{0}98.0 $\pm$ 1.87 & 
   \phantom{0}87.7 $\pm$ 1.74 & 
   77.4 $\pm$ 1.33 & 
   67.8 $\pm$ 1.26 & 
   66.3 $\pm$ 1.41 & 
   70.5 $\pm$ 1.67 \\
 $10^{-2}$ & 
   103.1 $\pm$ 1.69 & 
   \phantom{0}98.3 $\pm$ 1.91 & 
   \phantom{0}88.9 $\pm$ 1.76 & 
   81.8 $\pm$ 1.68 & 
   73.0 $\pm$ 1.44 & 
   68.9 $\pm$ 1.81 & 
   69.4 $\pm$ 1.84 \\
 $2\cdot 10^{-2}$ & 
   103.7 $\pm$ 1.99 & 
   \phantom{0}98.7 $\pm$ 1.74 & 
   \phantom{0}92.6 $\pm$ 1.57 & 
   85.3 $\pm$ 1.71 & 
   76.4 $\pm$ 1.78 & 
   71.4 $\pm$ 1.56 & 
   70.6 $\pm$ 1.83 \\
 $5\cdot 10^{-2}$ & 
   104.9 $\pm$ 2.19 & 
   102.7 $\pm$ 1.96 & 
   \phantom{0}98.1 $\pm$ 2.07 & 
   91.6 $\pm$ 1.82 & 
   82.7 $\pm$ 2.02 & 
   77.1 $\pm$ 2.03 & 
   74.6 $\pm$ 1.79 \\
 $10^{-1}$ & 
   110.3 $\pm$ 2.36 & 
   107.2 $\pm$ 2.63 & 
   103.7 $\pm$ 2.26 & 
   97.7 $\pm$ 2.14 & 
   89.5 $\pm$ 2.00 & 
   83.9 $\pm$ 2.22 & 
   80.8 $\pm$ 2.24 
 \\\addlinespace
 \bottomrule
\end{tabular}}
\label{tab:pqtest}
\end{table*}

%............................................................................

Tab.~\ref{tab:pqtest} illustrates the robustness of the algorithm. 
Note that the approach is not deterministic and thus always
returns different results since the candidates are chosen randomly at 
each iteration. Although the obtained masks for different seeds differ 
in most of the selected pixels, we obtain qualitatively comparable 
results:
The standard deviation does not exceed a value of $2.6$ and is even smaller 
for optimal values for $p$ and $q$, when the algorithm is run several 
times.

In Fig.~\ref{fig:results}, the images in the third row show the results of the
probabilistic sparsification for the test image \emph{trui} with a mask pixel
density of 4\% ($d=0.04$). To optimise the quality, we use $p=0.3$ and 
$q=10^{-6}$ (cf.\ Tab.~\ref{tab:pqtest}). Both the visual as well as the
quantitative results outperform the ones of the analytic approach. 
This can also be observed for the two test images \emph{walter} and 
\emph{peppers}, as is shown in Tab.~\ref{tab:results}. The parameters 
$p$ and $q$ are optimised for each image individually.

%...........................................................................

\subsubsection{Nonlocal Pixel Exchange}
\label{sec:non-local-pixel}

As we have seen in the previous section, probabilistic sparsification
outperforms the analytic approach. Nevertheless, it is not guaranteed to find an
optimal solution. An obvious drawback of probabilistic sparsification is the
fact that due to its greedy nature, once a point is removed, it will never be
put back into the mask again. Thus, especially at later stages, where only few
mask pixels are left, important points might have been removed, so that we 
end up in a suboptimal local minimum. We now present a method called nonlocal 
pixel exchange that allows to further improve the results of any previously 
obtained mask, in our case the one from probabilistic sparsification. It 
starts with a sparse, possibly suboptimal mask that contains already the 
desired density $d$ of mask pixels. In each step, it randomly selects a 
set of $m$ non-mask pixels as candidates. The candidate that exhibits the 
largest local error is then exchanged with a randomly chosen mask pixel. 
If the inpainting result with the new mask is worse than before, we revert 
the switch. Otherwise we proceed with the new mask.
By construction, the nonlocal pixel exchange can only improve the result. 
This algorithm always converges towards an optimal solution in terms of a
exchange of two pixels. Since we exchange at each iteration the same 
number of candidate pixels it follows that this approach is not 
equivalent to an exhaustive search through all possible combinations. 
Thus, one cannot guarantee convergence towards the global minimum.
The description below shows the details of the algorithm.

%...........................................................................

\begin{samepage}
  \begin{description}
    \item[\textbf{Algorithm 5:}] \textbf{Nonlocal Pixel Exchange}\hfill
          \hrule\vspace{2mm}
    \item[\emph{Input:}]\hfill\\
          Original image $\bm{f}$, pixel mask $\bm{c}$, size $m$ of candidate set, 
          the set $K$ of pixel indices of the mask $\bm{c}$.
    \item[\emph{Initialisation:}]\hfill\\
          $\bm{u} \coloneqq \inp{\bm{f}}$ and $\bm{c}^{\text{new}} \coloneqq \bm{c}$.
    \item[\emph{Compute:}]\hfill\\
          Repeat
          \begin{enumerate}
            \item Choose randomly $m \leq |K|$ pixel indices $i$ from $J \setminus K$ 
                  and compute the local error $e_i \coloneqq (u_i - f_i)^2$.
            \item Exchange step:\\
                  Choose randomly a $j \in K$ and set $c^{\text{new}}_j \coloneqq 0$.\\
                  For the largest value of $e_i$, set $c^{\text{new}}_i\coloneqq 1$.
            \item Compute $\bm{u}^{\text{new}} \coloneqq r(\bm{c}^{\text{new}}, \bm{f})$.
            \item If $\mse{\bm{u}} > \mse{\bm{u}^{\text{new}}}$
                  \begin{itemize}
                    \item[] $\bm{u} \coloneqq \bm{u}^{\text{new}}$ and $\bm{c} \coloneqq
                          \bm{c}^{\text{new}}$.
                    \item[] Update $K$.
                  \end{itemize}
                  else
                  \begin{itemize}
                    \item[] Reset $\bm{c}^{\text{new}} \coloneqq \bm{c}$.
                  \end{itemize}
          \end{enumerate}
          until no pairs can be found for exchange.
    \item[\emph{Output:}]\hfill\\
          Optimised mask $\bm{c}^{\text{new}}$.\\
          \vspace*{-2mm}\hrule
  \end{description}
\end{samepage}
%...........................................................................

As for the previous algorithms, we are interested in an optimal parameter
selection. Table~\ref{tab:mntest} shows the results for different choices 
of $m$ when the nonlocal pixel exchange is applied to the masks from the 
previous section (randomly selected, regular grid, and analytic approach from
Fig.~\ref{fig:results}) and in addition the one we obtained by probabilistic
sparsification (also Fig.~\ref{fig:results}).
Technically one could always choose $m=1$, but a noticeable speedup can be
obtained by choosing a larger $m$. In our experiments, values around 
$m=20$ result in fastest convergence.
 
%............................................................................

\begin{table*}
 \centering
 \caption{
   Mean squared error after $500{,}000$ iterations for different values
   for $m$, when the nonlocal pixel exchange is applied to different masks 
   of the test image {\em trui} (see also Fig.~\ref{fig:results}). The best 
   result for each mask is marked in boldface.}
 \begin{tabular*}{0.9\linewidth}{@{\extracolsep{\stretch{1}}}rcccccccc@{}}\toprule
   \multirow{2}{*}[-1mm]{mask\hspace{1cm}}        
   & \multicolumn{8}{c}{$m$}\\ 
   \cmidrule{2-9} & 1 & 5 & 10 & 20 & 30 & 40 & 50 & 100\\ 
   \cmidrule{1-1} 
   \cmidrule{2-2} 
   \cmidrule{3-3} 
   \cmidrule{4-4} 
   \cmidrule{5-5} 
   \cmidrule{6-6} 
   \cmidrule{7-7} 
   \cmidrule{8-8} 
   \cmidrule{9-9}
   randomly selected            
   & 49.88 & 46.43 & 44.99 & \textbf{44.78} & 45.00 & 45.07 & 45.25 & 48.22\\
   regular grid                 
   & 49.67 & 45.79 & 45.19 & \textbf{44.76} & 45.16 & 45.56 & 45.76 & 48.33\\
   analytic approach            
   & 49.19 & 45.88 & 45.14 & \textbf{44.70} & 45.33 & 45.56 & 46.13 & 49.31\\
   probabilistic sparsification 
   & 43.72 & 42.45 & 42.34 & \textbf{41.92} & 41.97 & 42.49 & 42.19 & 43.65\\
   \addlinespace\bottomrule
 \end{tabular*}
 \label{tab:mntest}
\end{table*}
 
%............................................................................

The nonlocal pixel exchange improves the masks from any method we have
considered so far; see Fig.~\ref{fig:nlpe-test-masks}. Especially
within the first few iterations we achieve significant quality gains. After
$500{,}000$ iterations, we reach with all three masks an MSE below 
$45$. The best reconstruction with an MSE of $41.92$ is obtained 
with the mask from probabilistic sparsification. It is depicted in 
Fig.~\ref{fig:results}. As for the previous methods, Tab.~\ref{tab:results} 
provides also quantitative results for the test images \emph{walter} and 
\emph{peppers}. They support the above observations.

%...........................................................................
 
\begin{figure}
  \centering
  \includegraphics[width=\linewidth]{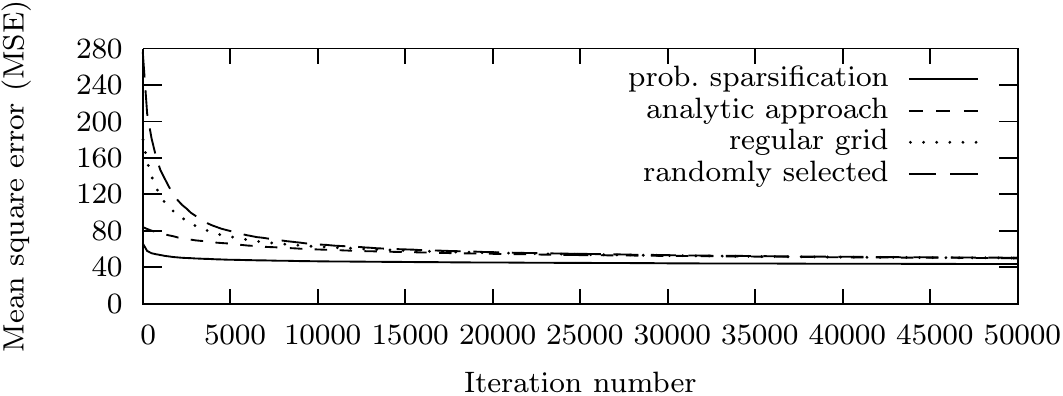}
  \caption{Convergence behaviour for the first 50{,}000 iterations, when the
    nonlocal pixel exchange is applied to different masks (cf.
    Figure~\ref{fig:results}(a,d,g)) of the trui test image with 
    optimal parameter $m$ (cf. Table \ref{tab:mntest}).}
  \label{fig:nlpe-test-masks}
\end{figure}
 
% ---------------------------------------------------------------------------- 

\subsection{Optimising Tonal Data}
\label{sec:optim-tonal-data}

So far, all our 2D optimisation approaches only propose a solution for 
the spatial optimisation problem. However, the results in 
Section~\ref{sec:optim-knots-appr} show that a tonal optimisation,
i.e.\ an optimisation of the grey values, can be very worthwhile. From a data
compression point of view, it is important to notice that changing the grey
values at the chosen data points does not increase the amount of data that 
needs to be stored. The quality improvements, on the other hand, can be 
remarkable. In this section, we present an approach that allows us to 
determine the optimal grey values for any given mask.

In order to find the optimal grey values $\bm{g}$ for a fixed mask $\bm{c}$, 
we consider the following minimisation approach:
\begin{equation}
  \label{eq:gvo-ansatz}
  \argmin_{\bm{g}}\{|\bm{f} - \inp{\bm{g}}|^2\}\;,
\end{equation}
where $|\cdot|$ is the standard Euclidean norm, $\bm{f}$ denotes the original
image, and $\inp{\bm{g}}$ the reconstruction from
\eqref{eq:inpainting_function}. Due to the linearity of $\bm{r}$ with 
respect to $\bm{g}$, this is a linear least squares problem. In our next
steps, we analyse its well-posedness properties and propose an efficient 
numerical algorithm. 

%---------------------------------------------------------------------------- 

\subsubsection{Existence and Uniqueness Results}
\label{sec:exist-uniq-results}

Let $\bm{e_i}$ denote the $i$-th canonical basis vector of 
$\mathbb{R}^{|J|}$, i.e.\ $e_{i,j}=1$ if $i=j$, and $0$ otherwise. Then 
we call $\inp{\bm{e_i}}$ the \emph{inpainting echo} of pixel $i$. 
Since $\bm{r}$ is linear in $\bm{g}$ we can express the reconstruction 
$\bm{u}$ as a superposition of its inpainting echoes: 
\begin{equation}
  \bm{u} \;=\; \inp{\bm g} 
         \;=\; \bm{r} \Big(\bm{c}, \sum_{i\in J} g_i \bm{e_i}\Big) 
         \;=\; \sum_{i\in J} g_i \, \inp{\bm{e_i}}.
\end{equation}
Since $\inp{\bm{e_i}}$ is $\bm 0$ for $c_i = 0$ (i.e. 
for $i \in J\setminus K$), we can simplify this to a summation over $K$:
\begin{equation}
  \inp{\bm g} = \sum_{i\in K} \, g_i \inp{\bm{e_i}}.
\end{equation}
For our minimisation problem \eqref{eq:gvo-ansatz}, this means that 
the coefficients $g_i$ can be chosen arbitrarily if $i \in J\setminus K$. 
For simplicity, we fix them at 0.
The remaining $g_i$ with $i \in K$ can be obtained by considering the least
squares problem
\begin{equation}
  \label{eq:normal_equations}
  \argmin_{\bm{g}_K} \left\{|\bm{B}\bm{g}_K - \bm f|^2\right\}
\end{equation}
where $\bm{g}_K = (g_i)_{i\in K}$ is a vector of size $|K|$, and $\bm{B}$ is a
$|J|\times|K|$ matrix that contains the vectors 
$\{\inp{\bm{e_i}} \:|\; i\in K\}$ as columns. 
The associated normal equations are given by
\begin{equation}
  \label{eq:normal-eqs}
  \bm{B}^\top \bm{B}\, \bm{g}_K = \bm{B}^\top \bm{f}.
\end{equation}
By construction, the $|K| \times |K|$ matrix $\bm{B}^\top \bm{B}$ is
positive semidefinite. It is, however, not obvious that the eigenvalue
$0$ cannot appear. The theorem below excludes such a singular situation.

\begin{theorem}
Let $K$ be nonempty. Then the matrix $\bm{B}^\top \bm{B}$ is invertible,
and thus the linear system \eqref{eq:normal-eqs} has a unique solution. 
\end{theorem}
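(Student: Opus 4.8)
The plan is to reduce the claim to the statement that $\bm{B}$ has full column rank $|K|$: once that is known, $\bm{B}^\top\bm{B}$ is not merely positive semidefinite but positive definite, hence invertible. Concretely, I would argue that $\bm{B}^\top\bm{B}\,\bm{g}_K=\bm{0}$ implies $|\bm{B}\bm{g}_K|^2=\bm{g}_K^\top\bm{B}^\top\bm{B}\,\bm{g}_K=0$, so $\bm{B}\bm{g}_K=\bm{0}$; full column rank of $\bm{B}$ then forces $\bm{g}_K=\bm{0}$. Thus the whole proof comes down to the linear independence of the inpainting echoes $\{\inpe{i}\mid i\in K\}$.

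The key observation I would exploit is that, for a \emph{mask} pixel, the inpainting echo is just a column of $\bm{M}^{-1}$. Recalling from \eqref{eq:discinp2} and \eqref{eq:inpainting_function} that $\inp{\bm{g}}=\bm{M}^{-1}\bm{C}\bm{g}$, and noting that $c_i=1$ for $i\in K$ gives $\bm{C}\bm{e_i}=\bm{e_i}$, I obtain $\inpe{i}=\bm{M}^{-1}\bm{e_i}$ for every $i\in K$. Hence $\bm{B}$ is precisely the submatrix of $\bm{M}^{-1}$ consisting of the columns indexed by $K$. This is also the natural place to re-use the earlier reduction: for $i\notin K$ the echo vanishes, which is exactly why the least-squares problem only involves the columns in $K$.

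To finish, I would invoke the fact, already established for the homogeneous diffusion system \eqref{eq:discinp2} (see \cite{MBWF11}), that $\bm{M}$ is invertible. Then $\bm{M}^{-1}$ has full rank $|J|$, so any $|K|\le|J|$ of its columns are linearly independent; equivalently, $\bm{B}\bm{g}_K=\bm{M}^{-1}\!\big(\sum_{i\in K}g_i\bm{e_i}\big)=\bm{0}$ together with injectivity of $\bm{M}^{-1}$ yields $\sum_{i\in K}g_i\bm{e_i}=\bm{0}$, and therefore $\bm{g}_K=\bm{0}$. Combined with the first paragraph, this shows $\bm{B}^\top\bm{B}$ is invertible. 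I do not expect any genuine obstacle here: the argument is short linear algebra, and the only steps needing a sentence of care are the identity $\bm{C}\bm{e_i}=\bm{e_i}$ for $i\in K$ and the explicit appeal to the previously known invertibility of $\bm{M}$.
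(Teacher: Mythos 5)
Your argument is correct and follows essentially the same route as the paper: identify each echo $\inpe{i}$ for $i\in K$ with the $i$-th column of $\bm{M}^{-1}$ via $\bm{C}\bm{e_i}=\bm{e_i}$, invoke the invertibility of $\bm{M}$ from \cite{MBWF11} to get linear independence of these columns, and conclude that $\bm{B}^\top\bm{B}$ is invertible. The only difference is that you spell out the standard quadratic-form step showing full column rank of $\bm{B}$ implies invertibility of $\bm{B}^\top\bm{B}$, which the paper leaves implicit.
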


\begin{proof}
Mainberger {\em et al.}~\cite{MBWF11} have proven that for a nonempty set $K$, 
the $|J| \times |J|$ matrix $\bm{M} = \bm{C} - (\bm{I}-\bm{C}) \bm{A}$ 
is invertible, i.e. $\bm{M}^{-1}$ exists. Moreover, for $i\in K$ we have
\begin{equation}
  \inp{\bm{e_i}} = \bm{M}^{-1} \bm{C}\bm{e_i}
  \stackrel{i\in K}{=} \bm{M}^{-1}\bm{e_i}.
\end{equation}
This shows that $\inp{\bm{e_i}}$ is the $i$-th column of $\bm{M}^{-1}$.
Since $\bm{M}$ is invertible, also $\bm{M}^{-1}$ is regular. Thus, all 
column vectors of $\bm{M}^{-1}$ have to be linearly independent. In
particular, this implies that also all columns 
$\{\inp{\bm{e_i}} \:|\; i\in K\}$ of the $|J| \times |K|$ matrix $\bm{B}$ 
are linearly independent. Therefore, $\bm{B}^\top \bm{B}$ 
is invertible, and the linear system \eqref{eq:normal-eqs} has a unique 
solution.
\qed{}
\end{proof}

%............................................................................

\subsubsection{Numerical Algorithm}
\label{sec:numerical-algorithm}

To find the solution of the minimisation problem~\eqref{eq:gvo-ansatz}, 
Mainberger \emph{et al.}~\cite{MHWT12} have solved the associated normal 
equations~\eqref{eq:normal-eqs} iteratively. In particular, they have chosen 
a randomised Gau{\ss}--Seidel scheme that updates the grey values at the 
individual mask 
points one after another. Alternatively, one could also employ different 
iterative solvers or solve the system of equations with direct methods 
such as LU- or QR-decompositions~\cite{Hi02}. 
In general, however, one has to state that typical methods which rely on 
inpainting echoes suffer from a relatively high computational cost to 
obtain the individual echoes. Although one can precompute and reuse them 
for the subsequent iteration steps, they still need to be computed at 
least once. This is particularly inefficient when a large amount of 
mask points is present. Moreover, precomputing inpainting echoes leads
to higher memory requirements. 
The more recent strategies in~\cite{CRP14,HW15} avoid the direct 
computation of the inpainting echoes. Instead, the original minimisation 
problem \eqref{eq:gvo-ansatz} is solved directly with the help of 
primal-dual methods or related sophisticated optimisation strategies. 
This gives more efficient algorithms for tonal optimisation.

In the following, we propose an alternative approach to find optimal 
tonal data. It is based on an accerated gradient descent strategy
that allows an efficient grey value optimisation. Similar to 
\cite{CRP14,HW15}, we consider the original minimisation problem 
\eqref{eq:gvo-ansatz} directly. Thus, we also avoid computing 
inpainting echoes. This leads to a reduced runtime as well 
as to low memory requirements. Below we first explain the classical
gradient method with exact line search. Afterwards we present a
novel variant that benefits from an accelation with a fast
explicit diffusion scheme in the sense of Grewenig {\em et al.}
\cite{GWB10}.

Our goal is to minimise the objective function
\begin{equation}
 E(\bm{g}) = \frac{1}{2}\, |\inp{\bm{g}} - \bm{f}|^2.
 \label{eq:error}
\end{equation}
Starting with an initialisation $\bm{g}^0=\bm{C}\bm{f}$, a gradient descent
scheme minimises this energy $E$ by iteratively updating the
current grey values for $k>0$ as
\begin{equation}
 \label{eq:gd}
\bm{g}^{k+1} = \bm{g}^k - \alpha\, \bm{\nabla} E(\bm{g}^k)
\end{equation}
with a step size $\alpha$ and the gradient $\bm{\nabla} E(\bm{g}^k)$ 
depending on the iterates $\bm{g}^k$. Denoting the inpainting solution 
at iteration step $k$ by $\bm{u}^k$, i.e. $\bm{u}^k := \inp{\bm{g}^k}$, 
the gradient can be written as
\begin{equation}
  \label{eq:2}
\bm{\nabla} E(\bm{g}^k) = \bm{J}^\top \left(\bm{u}^k - \bm{f}\right)
\end{equation}
where $\bm{J}$ is the Jacobian of $\inp{\bm{g}^k}$ with respect to the second
component. With the definition of $\inp{\bm{g}^k}$ from
\eqref{eq:inpainting_function}, we obtain
\begin{align}
  \label{eq:3}
  \bm{J}^\top &= \left(\left(\bm{C}-(\bm{I}-\bm{C})\bm{A}\right)^{-1}
           \bm{C}\right)^\top \nonumber\\
         &= \bm{C}\left(\bm{C}-\bm{A}(\bm{I}-\bm{C})\right)^{-1}
\end{align}
where we have exploited the symmetries of the matrices $\bm{C}$ and $\bm{A}$.
Computing the iterates $\bm{u}^k$ and the gradient $\bm{\nabla} E(\bm{g}^k)$
means to solve a linear system of equations for each of them. This can be 
done efficiently with a bidirectional multigrid solver as is suggested in 
\cite{MBWF11}.

The main parameter that we have to specify is the step size $\alpha$. As one 
possibility, it can be optimised to yield the largest possible decay
of the energy in each step. This comes down to the least squares problem 
\begin{equation}
\argmin_{\alpha>0} \left\{|\bm{f} -
          \inp{\bm{g}^k\!-\!\alpha\bm{\nabla} E(\bm{g}^k)}\!|^2\right\}.
\end{equation}
Exploiting the linearity of $\bm{r}$ in the second component allows 
to obtain the minimiser in closed form:
\begin{equation}
  \label{eq:5}
\alpha = \frac{(\bm{f}-\bm{u}^k)^\top \inp{\bm{\nabla} E(\bm{g}^k)}}
              {|\inp{\bm{\nabla} E(\bm{g}^k)}|^2}\,.
\end{equation}
This strategy is also known as exact line search. It guarantees that 
the sequence $(\bm{g}^k)_k$ converges to the minimum of the 
energy~\cite{BV04}. As stopping criterion, we consider the relative 
norm of the gradient, which should approach zero at the optimum. 
In other words, we stop as soon as
\begin{equation}
 \label{eq:6}
% \frac{|\bm{\nabla} E(\bm{g}^k)|^2}{|\bm{\nabla} E(\bm{g}^0)|^2} 
% < \varepsilon
  |\bm{\nabla} E(\bm{g}^k)|^2 \le \varepsilon \, |\bm{\nabla} E(\bm{g}^0)|^2 
\end{equation}
with some small number $\varepsilon>0$. An algorithmic overview of the
classical gradient descent method with exact line search is shown 
below. It serves as our baseline method.\par
 
%---------------------------------------------------------------------------
\pagebreak
\begin{samepage}
  \begin{description}
    \item[\textbf{Algorithm 6:}] \textbf{Grey Value Optimisation with Exact Line Search}\\[-3mm]
          \hrule\vspace{2mm}\nopagebreak
    \item[\emph{Input:}]\hfill\\
          Original image $\bm{f}$, pixel mask $\bm{c}$.
    \item[\emph{Initialisation:}]\hfill\\
          $\bm{g}^0\coloneqq\bm{C}\bm{f}$, \; $k\coloneqq 0$.
    \item[\emph{Compute:}]\hfill\\
          Repeat for $k \ge 0$
          \begin{enumerate}
            \item Compute the gradient
                  \begin{equation}
                    \bm{\nabla} E(\bm{g}^k) \coloneqq \bm{J}^\top 
                    \left(\inp{\bm{g}^{k}} - \bm{f}\right).
                  \end{equation}
            \item Determine the step size $\alpha$ with Equation~\eqref{eq:5}.
            \item Update the tonal data:
                  \begin{equation}
                    \bm{g}^{k+1} \coloneqq \bm{g}^{k} - \alpha\, \bm{\nabla} E(\bm{g}^k).
                  \end{equation}
          \end{enumerate}
          until the stopping criterion \eqref{eq:6} is fulfilled.
    \item[\emph{Output:}]\hfill\\
          Optimised grey values $\bm{g}$.\\
          \vspace*{-2mm}\hrule
  \end{description}
\end{samepage}
%---------------------------------------------------------------------------

In order to speed up the gradient descent approach, we propose an 
accelerated algorithm based on a so-called \emph{fast explicit diffusion 
(FED)} scheme. First applications of FED to image processing problems go 
back to Grewenig {\em et al.}~\cite{GWB10}. FED can be used to speed up 
any explicit diffusion-like algorithm that involves a symmetric matrix. 
While classical explicit schemes employ a constant time step size
that has to satisfy a restrictive stability limit, FED schemes involve
cycles of time step sizes where up to 50 \% of them can violate this
stability limit. Nevertheless, at the end of each cycle, stability in
the Euclidean norm is achieved. In contrast to classical explicit schemes 
that reach a stopping time of order $O(M)$ in $M$ steps, FED schemes with 
cycle length $M$ progress to $O(M^2)$. This allows a very substantial
acceleration.

Since the gradient descent scheme can be seen as an explicit scheme with
a symmetric matrix, FED is applicable: If $\alpha^*$ denotes a fixed step 
size for which \eqref{eq:gd} is stable in the Euclidean norm, one replaces 
it by the cyclically varying step sizes
\begin{equation}
 \label{eq:ai}
 \alpha_i \,=\, \alpha^* \cdot
              \dfrac{1}{2\cos^2\left(\pi\cdot\frac{2i+1}{4M+2}\right)}
 \quad (i=0,...,M\!-\!1).
\end{equation}
For large cycle lengths $M$, one should permute the order of the step sizes 
to tame rounding errors; see \cite{WGSB15} for more details on this and an 
exhaustive explanation of the FED framework in general.
The FED cycles should be iterated until the stopping criterion \eqref{eq:6} 
is fulfilled. A related cyclic optimisation strategy has also been 
investigated in~\cite{SSM13}.

In order to determine the individual step sizes within each cycle, we 
have to find a step size $\alpha^*$ that is within the stability limit. 
The following result is well-known in optimisation theory \cite{NN04}: 
If $E(\bm{g})$ is continuous and its gradient is Lipschitz continuous, 
i.e. there is a constant $L$ sucht that
\begin{equation}
\left| \bm{\nabla}E(\bm{g}_1) - \bm{\nabla}E(\bm{g}_2) \right|
   \leq L \cdot \left| \bm{g}_1 - \bm{g}_2 \right|
\end{equation}
for all $\bm{g}_1$ and $\bm{g}_2$, then the gradient descent scheme is 
stable for all step sizes $\alpha^*$ fulfilling
\begin{equation}
0 < \alpha^* < \frac{2}{L}.
\end{equation}
It is straightforward to verify that in our case $L$ can be chosen as 
the squared spectral norm of the inpainting matrix 
$\bm{D}:=\bm{M}^{-1} \bm{C}$, i.e.
\begin{equation}
 \label{eq:spectral_norm}
 L = |\bm{D}|^2 := \rho(\bm{D}^\top \bm{D}),
\end{equation}
where $\rho(\bm{D}^\top \bm{D})$ denotes the spectral radius of the symmetric 
matrix $\bm{D}^\top \bm{D}$. One possibility to estimate the spectral radius 
is to use Gershgorin's circle theorem. However, this may give a too 
pessimistic estimate. Instead, we propose to use the power method to determine 
the maximum eigenvalue of $\bm{D}^\top \bm{D}$; see e.g.~\cite{ATK08}. The 
convergence of this method turns out to be relatively fast, such that one 
obtains already a reasonable estimate of the spectral radius after $5$ 
iterations.

Although it may appear tempting to choose $\alpha^*$ close to the 
stability limit $\frac{2}{L}$, this can result in a suboptimal convergence
speed, since high frequent error components are damped too slowly. 
Our experiments suggest that a good choice for $\alpha^*$ is two third of 
the stability limit: 
\begin{equation}
 \alpha^* = \frac{4}{3L}.
\end{equation}
Similar strategies are also common e.g.~in multigrid approaches that
use a damped Jacobi method with damping factor $\frac{2}{3}$ as a
baseline solver \cite{Bri87}.

Below we give an algorithmic overview over all steps to perform tonal 
optimisation with FED-accelerated gradient descent:

%---------------------------------------------------------------------------
  \begin{description}
    \item[\textbf{Algorithm 7:}] \textbf{Grey Value Optimisation with FED}\hfill
          \hrule\vspace{2mm}
    \item[\emph{Input:}]\hfill\\
          Original image $\bm{f}$, pixel mask $\bm{c}$, FED cycle length $M$. 
    \item[\emph{Initialisation:}]\hfill\\
          $\bm{g}^0\coloneqq\bm{C}\bm{f}$, \; $k\coloneqq 0$.
    \item[\emph{Compute:}]\hfill
          \begin{enumerate}
            \item Estimate $L$ in~\eqref{eq:spectral_norm} with the power method.
            \item Determine the FED time steps $\alpha_0,\ldots,\alpha_{M-1}$ 
                  according to \eqref{eq:ai} with $\alpha^*=\frac{4}{3L}$. 
                  If necessary, permute them.
            \item Repeat for $k \ge 0$
                  \begin{enumerate}
                    \item $\bm{g}^{k,0}:=\bm{g}^{k}$ 
                    \item For $i=0,\ldots,M-1$ do
                          \begin{enumerate}
                            \item Compute the gradient
                                  \begin{equation}
                                    \bm{\nabla} E(\bm{g}^{k,i}) \coloneqq \bm{J}^\top 
                                    \left(\inp{\bm{g}^{k,i}} - \bm{f}\right).
                                  \end{equation}
                            \item Update the tonal data:        
                                  \begin{equation}
                                    \bm{g}^{k,i+1} \coloneqq \bm{g}^{k,i} - \alpha_i\, 
                                    \bm{\nabla} E(\bm{g}^{k,i}).
                                  \end{equation}
                          \end{enumerate}
                    \item $\bm{g}^{k+1}:=\bm{g}^{k,M}$ 
                  \end{enumerate}
          \end{enumerate}
          until the stopping criterion \eqref{eq:6} is fulfilled.
    \item[\emph{Output:}]\hfill\\
          Optimised grey values $\bm{g}$.\\
          \vspace*{-2mm}\hrule
  \end{description}
% ---------------------------------------------------------------------------- %

Since the grey value optimisation problem is strictly convex, all 
convergent algorithms yield the same minimiser and are therefore 
qualitatively equivalent. They only differ by their run times.
The following experiment gives an impression of realistic run times 
of both gradient descent algorithms for greyvalue optimisation. As
before, we consider the inpainting problem with the test image {\em trui} 
($256 \times 256$ pixels) and 4 \% mask density. The stopping parameter 
for our iterations was set to $\varepsilon :=0.001$. With a C 
implementation on a desktop PC with Intel Xeon processor (3.2GHz), the 
exact line search algorithm requires $458$ seconds to perform $262$ 
iterations. A corresponding FED algorithm with $\alpha^*=0.01$ needs 
only $77$ seconds to compute $4$ cycles of length $M=15$.
In comparison, a CPU implementation of the primal--dual approach of 
Hoeltgen and Weickert \cite{HW15} requires for the same problem a 
run time of $346$ seconds. This illustrates the favourable performance 
of the FED-accelerated gradient descent method.\par{}
As the FED algorithm is based on an explicit scheme, it is also 
well-suited for implementations on parallel hardware such as GPUs. 
This can lead to very substantial additional accelerations.

%.............................................................................

\subsubsection{Qualitative Evaluation}
\label{sec:evaluation}

In order to evaluate the capabilities of the grey value optimisation, 
we apply it to the masks obtained so far for the test image \emph{trui}. 
In all cases this results in a clear improvement of the reconstruction
quality, both visually and in terms of their MSE; see Fig.~\ref{fig:results} 
as well as Tab.~\ref{tab:results}. This confirms also our impressions from 
the 1D scenario in Section~\ref{sec:optim-knots-appr}. Especially suboptimal 
masks benefit a lot from grey value optimisation: It can compensate for
many deficiencies that are caused by an inferior spatial selection strategy. 
This becomes particularly apparent when considering the result for the random 
mask or the mask on the regular grid. It is remarkable how much gain in 
quality is possible by simply choosing different grey values. 
However, also the reconstruction we obtain with the best mask using 
probabilistic sparsification and nonlocal pixel exchange can be improved
substantially: The MSE is reduced from $41.92$ to $27.24$.
Similar improvements can also be observed for the images \emph{walter} and 
\emph{peppers}; see Tab.~\ref{tab:results}. The best reconstructions are 
depicted in Fig.~\ref{fig:resultswalterpeppers}.

%...........................................................................

\begin{figure*}
  \centering{}
  \resizebox{\textwidth}{!}{
 \begin{tabular}{C@{\hspace{2mm}}EEE} 
 \toprule & mask & reconstruction & reconstruction with optimal tonal data\\[-1mm]
 \midrule \\[-4mm]
 random mask
   & \includegraphics[width=0.17\textwidth]{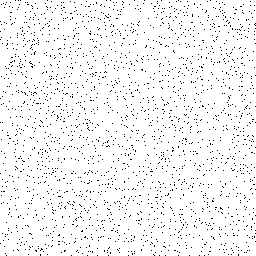} 
   & \includegraphics[width=0.17\textwidth]{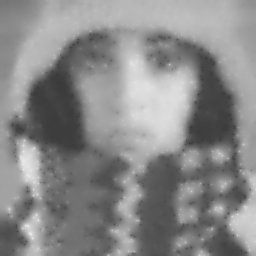} 
   & \includegraphics[width=0.17\textwidth]{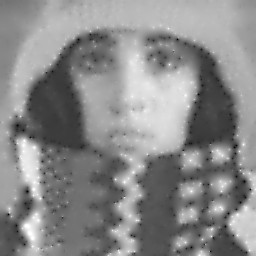} \\
   & & MSE: 273.10 & MSE: 151.25\\[-1mm]
 \midrule \\[-4mm]
 regular grid
   & \includegraphics[width=0.17\textwidth]{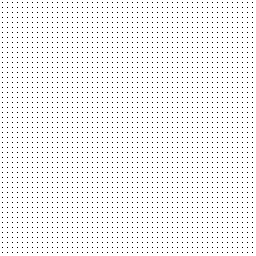} 
   & \includegraphics[width=0.17\textwidth]{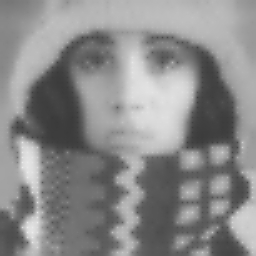} 
   & \includegraphics[width=0.17\textwidth]{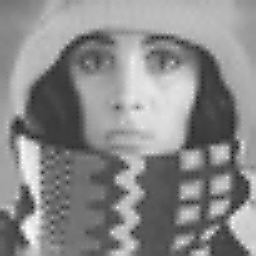} \\
   & & MSE: 181.72 & MSE: 101.62\\[-1mm]
 \midrule \\[-4mm]
 anal. approach
   & \includegraphics[width=0.17\textwidth]{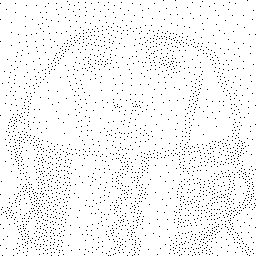} 
   & \includegraphics[width=0.17\textwidth]{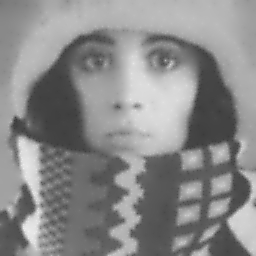} 
   & \includegraphics[width=0.17\textwidth]{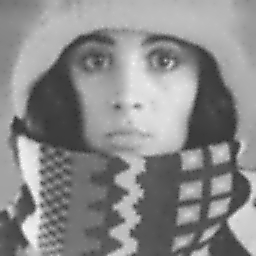} \\
   & & MSE: 84.04 & MSE: 42.00 \\[-1mm] 
 \midrule \\[-4mm]
 probab.~sparsif.
   & \includegraphics[width=0.17\textwidth]{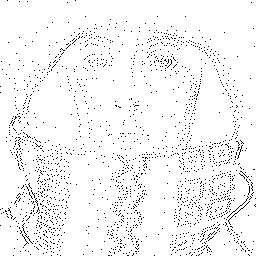} 
   & \includegraphics[width=0.17\textwidth]{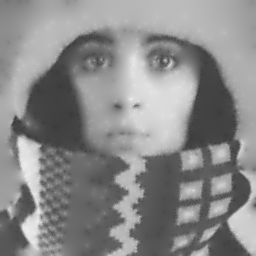} 
   & \includegraphics[width=0.17\textwidth]{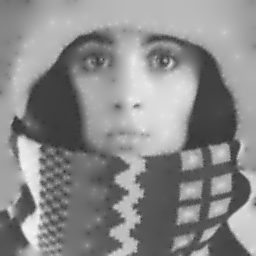} \\
   & & MSE: 66.11 & MSE: 36.04 \\[-1mm]
 \midrule \\[-4mm]
 nonl. pixel exch.
   & \includegraphics[width=0.17\textwidth]{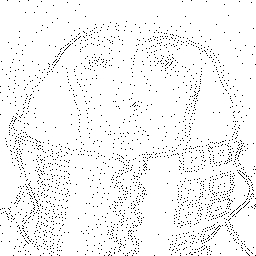} 
   & \includegraphics[width=0.17\textwidth]{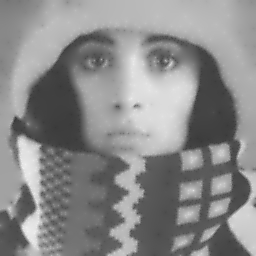} 
   & \includegraphics[width=0.17\textwidth]{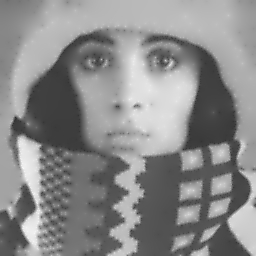} \\
  & & MSE: 41.92  & MSE: 27.24 \\[-1mm]
 \bottomrule 
 \end{tabular}
 }
\caption{Evaluation of different inpainting data using 4\% of all pixels.
 \textbf{Left column:} Different masks obtained by using a regular grid, 
   the analytic approach ($s=0.80$, $\sigma = 1.6$), 
   a probabilistic sparsification ($p=0.3$, $q=10^{-6}$),
   and with an additional nonlocal pixel exchange ($m=20$, $5\cdot{}10^{5}$ 
   iterations) after the previous probabilistic sparsification.
 \textbf{Middle column:} Reconstructions with homogeneous diffusion inpainting
   with the masks from the first column. 
 \textbf{Right column:} Same as in the middle column, but with optimal tonal 
   data.}
 \label{fig:results}
\end{figure*}
 
%...........................................................................

\begin{table*}
 \setlength{\tabcolsep}{1em}
 \centering{}
 \caption{Comparison of the reconstruction error (MSE) with 4\% of all pixels
   for different test images and different inpainting data. 
   The parameters were applied to both approaches, with and without grey value 
   optimisation (GVO). The nonlocal pixel exchange was performed with 
   $5\cdot{}10^{5}$ iterations for each experiment. 
%   Our combination of 
%   probabilistic sparsification with nonlocal pixel exchange can outperform 
%   the regular selection of data points as well as the analytic approach in 
%   each case.
   }\label{tab:results}
\resizebox{\textwidth}{!}{
 \begin{tabular}{cccccccccc}\toprule
 \multirow{2}{*}{image}
   & \multirow{2}{*}{GVO} 
   & randomly 
   & regular 
   & \multicolumn{2}{c}{analytic} 
   & \multicolumn{2}{c}{probabilistic}         
   & \multicolumn{2}{c}{nonlocal}\\ 
   &                      
   & selected 
   & grid    
   & \multicolumn{2}{c}{approach} 
   & \multicolumn{2}{c}{sparsification}        
   & \multicolumn{2}{c}{pixel exchange}\\ 
   \cmidrule(r){1-1} 
   \cmidrule(lr){2-2} 
   \cmidrule(lr){3-3} 
   \cmidrule(lr){4-4} 
   \cmidrule(lr){5-6} 
   \cmidrule(lr){7-8} 
   \cmidrule(lr){9-10} 
 \multirow{2}{*}{\emph{trui}}  
   & no  
   & 273.10  
   & 181.72 
   & 84.04 
   & \multirow{2}{1.5cm}{($\sigma=1.6$, \hspace*{1ex}$s=0.80$)} 
   & 66.11 
   & \multirow{2}{1.7cm}{($p=0.3$, \hspace*{1ex}$q=10^{-6}$)} 
   & 41.92          
   & \multirow{2}{*}{($m=20$)} \\ 
   & yes                  
   & 151.25   
   & 101.62  
   & 42.00                        
   & 
   & 36.04 
   & 
   & \textbf{27.24} 
   &  \\ \addlinespace{}
 \multirow{2}{*}{\emph{walter}}  
   & no                   
   & 297.07   
   & 184.00  
   & 39.85                        
   & \multirow{2}{1.5cm}{($\sigma=1.5$, \hspace*{1ex}$s=1.00$)} 
   & 32.96 & \multirow{2}{1.7cm}{($p=0.2$, \hspace*{1ex}$q=10^{-6}$)} 
   & 18.37          
   & \multirow{2}{*}{($m=30$)} \\
   & yes                  
   & 155.50   
   & 91.97   
   & 20.19                        
   & 
   & 19.24 
   &  
   & \textbf{12.45} 
   & \\ \addlinespace{}
 \multirow{2}{*}{\emph{peppers}} 
   & no                   
   & 278.61   
   & 185.04  
   & 70.05                        
   & \multirow{2}{1.5cm}{($\sigma=1.5$, \hspace*{1ex}$s=0.95$)} 
   & 44.85 & \multirow{2}{1.7cm}{($p=0.1$, \hspace*{1ex}$q=10^{-6}$)} 
   & 29.63          
   & \multirow{2}{*}{($m=30$)} \\
   & yes  
   & 156.15   
   & 104.41  
   & 43.77   
   & 
   & 28.58 
   &  
   & \textbf{25.10} 
   & \\ \bottomrule
\end{tabular}}
\end{table*}

%%%%%%%%%%%%%%%%%%%%%%%%%%%%%%%%%%%%%%%%%%%%%%%%%%%%%%%%%%%%%%%%%%%%%%%%%%%%%

\section{Extensions to Other Inpainting Operators}
\label{sec:extensions}
 
We have seen that optimising the interpolation data allows to obtain high 
quality reconstructions with only $4\%$ of all pixels. These results are 
also remarkable in view of the fact that so far we have used a very simple 
interpolation operator: the Laplacian. It is unlikely that it offers the
best performance. In this section we investigate the question if the 
algorithms of Section~\ref{sec:2-d-optimisation} can be used with, or
extended to more advanced inpainting operators. 
We focus on two representative operators that have proven their good 
performance in the context of PDE-based image compression with sparse data 
\cite{GWWB05,GWWB08,CRP14,SPME14}: the biharmonic operator and the EED 
operator.

A straightforward extension of the Laplacian is the biharmonic operator,
i.e.\ we replace $\Delta u$ in \eqref{eq:ci1} by $-\Delta^2 u$. Using it for
interpolation comes down to thin plate spline interpolation \cite{Du76}, a
rotationally invariant multidimensional generalisation of cubic spline
interpolation. Compared to the Laplace operator, it yields a smoother solution
$u$ around the interpolation data, since its Green's function is twice
differentiable. This avoids the typical singularities that distort the 
visual quality with homogeneous diffusion inpainting. These artifacts 
are caused by the logarithmic singularities of the Green's function of
the two-dimensional Laplacian. On the other hand, biharmonic inpainting 
is prone to over- and undershoots, i.e. the values of $u$ may leave the range 
of the inpainting data in $K$. This cannot happen for homogeneous diffusion 
inpainting which fulfils a maximum--minimum principle. Nevertheless, a
number of evaluations show that biharmonic inpainting can offer a better 
reconstruction quality than homogeneous diffusion inpainting 
\cite{GWWB05,GWWB08,CRP14,SPME14}.\par{}
Secondly we consider an anisotropic nonlinear diffusion operator, namely 
{\em edge enhancing-diffusion (EED)}. Originally it has been introduced for 
image denoising \cite{We94e}, and its application in image compression goes 
back to Gali\'c {\em et al.}~\cite{GWWB05}. Using EED means that we replace 
$\Delta u$ in 
\eqref{eq:ci1} by $\operatorname{div}(\bm{D}(\bm{\nabla} u_\sigma) 
\bm{\nabla} u)$. The positive definite matrix $\bm{D}(\bm{\nabla} u_\sigma)$ 
is the so-called diffusion tensor. It steers the diffusion process by its 
eigenvectors and eigenvalues. They depend on the gradient of a 
Gaussian-smoothed version $u_\sigma$ of the image $u$, where $\sigma$ 
denotes the standard deviation of the Gaussian. The first 
eigenvector of $\bm{D}$ is chosen to be orthogonal to $\bm{\nabla} u_\sigma$, 
and the corresponding eigenvalue is fixed at $1$. This gives full diffusion /
inpainting along image edges. In contrast, the second eigenvector
is chosen to be parallel to $\bm{\nabla} u_\sigma$, and its eigenvalue is
a decreasing function of the local image contrast $|\bm{\nabla} u_\sigma|$.
Thus, one reduces diffusion / inpainting across high contrast edges. For 
image compression, one usually chooses the diffusivity of 
Charbonnier {\em et al.} \cite{CBAB97}, which is given by 
$\left(1 + |\bm{\nabla} u_\sigma|^2/\lambda^2\right)^{-1/2}$. 
The parameter $\lambda > 0$ allows to steer the contrast dependence.\par{}
Image inpainting with EED can reconstruct edges in high quality, even 
when the specified data is sparse \cite{SPME14}. This explains why EED 
has become one of the best performing operators for PDE-based image 
compression \cite{GWWB05,GWWB08,SPME14}. 
Moreover, for second order elliptic differential operators such as EED, the
continuous theory guarantees a maximum--minimum principle \cite{We97}.
Experiments show that in contrast to homogeneoues diffusion inpainting, 
EED does not suffer from singularities \cite{SPME14}.\par{}
What are the changes in our framework, when we replace the Laplacian 
by the biharmonic or the EED operator?
If we discretise the biharmonic operator with central finite 
differences, we have to exchange the matrix $\bm A$ in \eqref{eq:discinp}
by another constant matrix, but the structure of this equation remains the 
same as for homogeneous diffusion inpainting: It is a linear system of
equations in the unknown vector $\bm{u}$. For EED, however, the discrete
differential operator $\bm{A}$ depends nonlinearly on the reconstruction 
$\bm{u}$. Thus, the resulting system of equations becomes nonlinear:
\begin{equation}
 \label{eq:discinp2a}
 \bm{C}\left(\bm{u}-\bm{f}\right) -
 \left(\bm{I} - \bm{C}\right) \bm{A}(\bm{u})\,\bm{u} = \bm{0}.
\end{equation}
While this nonlinearity does not affect our spatial optimisation, we will 
see that it makes the tonal optimisation more difficult.
 
% ---------------------------------------------------------------------------- 

\subsection{Optimising Spatial Data}
\label{sec:ext_spatial}
 
Let us consider the spatial optimisation first. The theory behind the 
analytic approach is strongly based on homogeneous diffusion, and no
extensions to more sophisticated inpainting operators have been 
discussed in \cite{BBBW08}. Therefore, we do not consider this 
approach in this section.

In contrast, the probabilistic approach can be used without any restrictions 
and changes for both the biharmonic operator as well as for EED. As before, 
we search for the best parameters of the spatial optimisation methods. 
In all our experiments, we fix the EED parameters to $\lambda=0.8$ and 
$\sigma=0.7$, since this gives reconstructions of high quality.\par{}

Results for both operators and methods in comparison to homogeneous diffusion
inpainting are shown in Tab.~\ref{tab:ext_results}. Interestingly, the 
biharmonic operator does not achieve better results than the homogeneous 
one in the case of pure probabilistic sparsification without nonlocal 
pixel exchange. Our explanation for this observation is as follows: 
Biharmonic inpainting has a positive effect of increased smoothness 
around data points, and a negative effect by over- and undershoots in 
the inpainting domain. The latter one is ignored in the 
sparsification decision, since the point selection is based on a purely 
local error measurement. 
Thus, for very sparse data, over- and 
undershoots can become particularly detrimental on global error measures
such as the MSE. 
In our experiment, the minimum and maximum values of 
the reconstruction lie around $-59$ and $346$, respectively. This is far 
beyond the range of the original image which is given by $[56, 214]$. 
% However, experiments indicate that for a larger number of mask pixels, 
% the biharmonic operator can outperform the homogeneous one when
% using probabilistic sparsification. 
With an additional nonlocal pixel exchange, we can overcome this problem, 
since it uses the MSE as criterion to redistribute the mask pixels to 
more favourable locations. 
As a consequence, the grey value range of the reconstruction shrinks to 
the interval $[47,248]$, and the MSE falls from $79.26$ to $20.89$.
This is far better than the MSE of $41.92$ that we achieve for homogeneous 
diffusion inpainting.

However, EED with probabilistic sparsification gives far superior results. 
Already without nonlocal pixel exchange, we obtain an MSE of $24.20$. 
Postprocessing with nonlocal pixel exchange reduces the error to only 
$12.62$.

Figures~\ref{fig:ext_results}(a) and (c) depict the masks obtained with
probabilistic sparsification combined with the nonlocal pixel exchange for both
operators. Comparing them with the one from homogeneous diffusion inpainting
(see Fig.~\ref{fig:results}), we observe interesting differences: Homogeneous 
diffusion inpainting requires more pixels near edges to represent these 
discontinuities well. Thus, for a specified pixel number, it has less 
pixels to approximate the interior of the regions. This explains why 
biharmonic or EED inpainting can achieve lower errors.
% 
% Finally, Fig.~\ref{fig:ext_results} depicts the best masks and reconstructions
% for both operators, i.e., after applying probabilistic sparsification combined
% with a nonlocal pixel exchange. 

%............................................................................
 
\begin{table*}
  \setlength{\tabcolsep}{1em}
\caption{
  Comparison of the reconstruction error (MSE) with 4\% of all pixels for 
  different inpainting operators and different data optimisation algorithms. 
  The parameters were applied to both approaches, with and without grey value 
  optimisation (GVO). The nonlocal pixel exchange was performed with 
  $5 \cdot 10^5$ iterations for each experiment.}
  \label{tab:ext_results}
 \centering
\resizebox{\textwidth}{!}{
\begin{tabular}{@{}lllll}\toprule
 operator & 
   reg. grid & 
   probab. sparsif. & 
   nonl. pixel exch. & 
   with GVO \\ 
 \cmidrule(r){1-1} 
   \cmidrule(lr){2-2} 
   \cmidrule(lr){3-3} 
   \cmidrule(lr){4-4} 
   \cmidrule(l){5-5}
 homogeneous & 
   181.72 & 
   66.11\ ($p=0.3$, $q=10^{-6}$) & 
   41.92\ ($m=20$)   & 
   \textbf{27.24}\\
 biharmonic & 
   107.96 & 
   79.26\ ($p=0.005$, $q=10^{-6}$) & 
   20.89\ ($m=10$) & 
   \textbf{16.73}\\
 EED ($\lambda=0.8$, $\sigma=0.7$) & 
   102.85 & 
   24.20\ ($p=0.05$, $q=10^{-6}$) & 
   12.62\ ($m=30$) & 
   \textbf{10.79}\\ 
 \bottomrule
\end{tabular}}
\end{table*}

%............................................................................
 
\begin{figure*}
\centering
\begin{minipage}{0.235\textwidth}
 \centering
 \includegraphics[width=\textwidth]{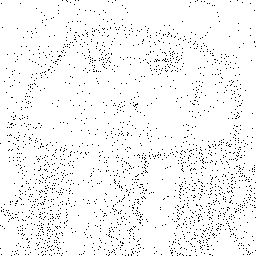}\\
 \textbf{(a)}
\end{minipage}\hfill
\begin{minipage}{0.235\textwidth}
 \centering
 \includegraphics[width=\textwidth]{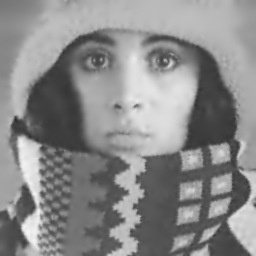}\\
 \textbf{(b)}
\end{minipage}\hfill
\begin{minipage}{0.235\textwidth}
 \centering
 \includegraphics[width=\textwidth]{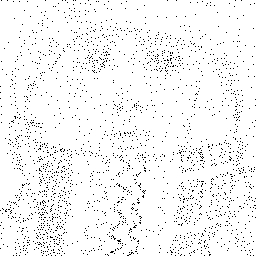}\\
 \textbf{(c)}
\end{minipage}\hfill
\begin{minipage}{0.235\textwidth}
 \centering
 \includegraphics[width=\textwidth]{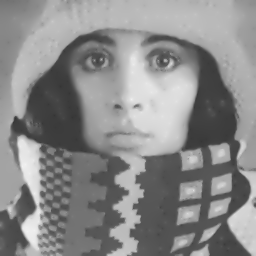}\\
 \textbf{(d)}
\end{minipage}
\caption{
  Best mask and reconstruction for the biharmonic operator \textbf{(a, b)} 
  and EED \textbf{(c, d)} for the test image \emph{trui} with $4$ \% of all
  pixels, using probabilistic sparsification, nonlocal pixel exchange, 
  and tonal optimisation. The image (b) has an MSE of $16.73$, while image 
  (d) has an MSE of $10.79$. Parameters have been chosen as in Table 
  \ref{tab:ext_results}.}
  \label{fig:ext_results}
\end{figure*}

% ---------------------------------------------------------------------------- %

\subsection{Optimising Tonal Data}
\label{sec:ext_tonal}

We have seen that for homogeneous diffusion inpainting, an additional tonal 
optimisation yields significant improvements in the reconstruction quality. 
Thus, we should also investigate if it is beneficial for biharmonic and EED 
inpainting.\par{}

In the case of biharmonic inpainting, our tonal optimisation strategy
from Subsection \ref{sec:optim-tonal-data} carries over literally, 
since we are still facing a linear least squares problem: For a fixed 
inpainting mask, the reconstruction depends linearly of the specified 
grey values.
Thus, all we have to do is to exchange the discrete differential operator 
for homogeneous diffusion inpainting by its biharmonic counterpart. 
Also our algorithms such as the FED-accelerated gradient descent remain 
applicable. The final results for biharmonic inpainting with spatial 
and tonal optimisation are listed in Tab.~\ref{tab:ext_results}, 
and the best reconstruction is depicted in Fig.~\ref{fig:ext_results}(b). 
As one can see, tonal optimisation allows to reduce the MSE from $20.89$ 
to $16.73$.\par{}

Since our tonal optimisation methods are tailored towards linear least 
squares formulations, specific challenges arise when we want to extend 
them to EED-based inpainting. The nonlinearity of the EED inpainting 
scheme prevents closed form expressions such as \eqref{eq:2} and 
\eqref{eq:3}. This is caused by the fact that the matrix $\bm{A}$ is
now a function of the inpainted image $\bm{u}^k$, which itself depends 
on the grey value data $\bm{g}^k$ in the specified pixels. Although
this concatenated mapping is formally smooth, one should keep in mind
that EED has the ability to create edge-like structures. This means
that in practice the problem is fairly ill-conditioned: Small local
changes in $\bm{g}^k$ may have a strong global impact on $\bm{u}^k$, 
on $\bm{A}(\bm{u}^k)$, and on the Jacobian of the error function 
(\ref{eq:error}).
Moreover, there is no guarantee anymore that the tonal optimisation
problem is strictly convex. Thus, it may have many local minimisers. 
It is therefore not surprising that different algorithms and even 
different parameter settings within the same algorithm may end up 
in different minimisers.  
Since it is difficult to design practically feasable algorithms that
guarantee to find the global minimiser, we focus on transparent and 
conceptually simple local optimisation strategies such as gradient 
descent.
We have done a number of experiments with several variants of gradient 
descent approaches for tonal optimisation in EED-based inpainting. 
Below we describe the method that has yielded the best results in terms 
of reconstruction quality.

We suggest to modify the gradient descent approach for \eqref{eq:error} 
as follows. While we keep the basic structure of \eqref{eq:gd} and 
\eqref{eq:2}, we lack a closed form solution of type
\eqref{eq:3} for the Jacobian $\bm{J}$ that is now an unknown 
function of the evolving grey value data $\bm{g}^k$. As a remedy, we 
approximate the Jacobian by numerical differentiation:
\begin{equation}
  \label{eq:4}
  (\bm{J}(\bm{g}^{k}))_{i,j} :=
  \frac{\inp{\, \bm{g}^{k}\!+ \eta \, \bm{e_j}}_{i}-
    \inp{\bm{g}^{k}}_{i}}{\eta},
\end{equation}
where $i$ denotes the pixel index, $j$ is the index of an individual 
mask point, and the parameter $\eta>0$ quantifies a small grey value
perturbation at the mask point $j$. As before, $\bm{e_j}$ is the
canonical basis vector with a unit impulse in pixel $j$. Note that 
the grey values at all non-mask points do not have any influence on 
the inpainting result. 
Thus, the Jacobian does not have to be computed for those locations.

In contrast to our linear grey value optimisation strategies,
we abstain from adapting the gradient descent step size $\alpha$ 
by means of exact line search or cyclic FED-based variations: We 
have experienced that the high sensitivity of the Jacobian w.r.t. 
$\bm{g}^{k}$ suggests to keep the gradient descent step size $\alpha$ 
fairly small to capture this dynamics in an adequate way. On the other 
hand, too small values for $\alpha$ can also result in a convergence 
towards a fairly poor local minimiser. Thus, in practice, one fixes
$\alpha$ to some small value.

Similar considerations apply to the parameter $\eta$ in \eqref{eq:4}:
The nonlinear dynamics of the Jacobian suggests to use small values for
$\eta$ in order to approximate the derivative well. On the other side, 
if $\eta$ is too small, numerical problems can arise, since both the 
numerator and the denominator in \eqref{eq:4} approach zero.

Table \ref{tab:EED-GVO-MSE} shows how an appropriate selection of the
parameters $\alpha$ and $\eta$ can be used to optimise the reconstruction 
quality for EED-based inpainting. We observe that in our test scenario 
the resulting tonal optimisation step allows to reduce the MSE from 
$12.62$ to $10.79$. The corresponding reconstruction is depiced
in Fig.~\ref{fig:ext_results}(d).
Tab.~\ref{tab:ext_results} shows that the MSE of $10.79$ obtained for 
EED is far better than the errors for homogeneous diffusion inpainting 
(MSE=$27.24$) and for biharmonic inpainting (MSE=$16.73$). To the best 
of our knowledge, such a reconstuction quality has never been achieved 
before for PDE-based inpainting with a mask density of only $4 \%$. 

%............................................................................
 
\begin{table}
 \centering
 \caption{
   MSE obtained after a tonal optimisation for the EED inpainting approach 
   with the final mask from Fig.~\ref{fig:ext_results}. The parameter 
   $\alpha$ denotes the fixed step size in each gradient descent iteration, 
   and $\eta$ is the step size used for the computation of the derivatives 
   by means of finite differences.}
 \label{tab:EED-GVO-MSE}
 \begin{tabular}{@{\extracolsep{\fill}}ccc@{\extracolsep{\fill}}}\toprule
   \multirow{2}{*}[-1mm]{$\eta$} & 
   \multicolumn{2}{c}{$\alpha$} \\
   \cmidrule{2-3} & $10^{-3}$ & $10^{-2}$ \\ 
   \cmidrule(r){1-1} \cmidrule(lr){2-2} \cmidrule(l){3-3}
   \addlinespace
   1.0 & 10.86 & \textbf{10.79} \\
   1.5 & 10.93 & 10.80 \\
   2.0 & 10.88 & 10.80 \\
   3.0 & 10.88 & 10.81 \\ 
   \addlinespace\bottomrule
 \end{tabular}
\end{table}

%%%%%%%%%%%%%%%%%%%%%%%%%%%%%%%%%%%%%%%%%%%%%%%%%%%%%%%%%%%%%%%%%%%%%%%%%%%%%

\section{Summary and Conclusions}
\label{sec:conclusion}

The contributions of our work are twofold: On the one hand we gave 
a comprehensive survey on the state-of-the-art in PDE-based 
image compression. It enables the readers to obtain an overview 
of the achievements that have been made in this field and guide 
them to the relevant literature.

On the other hand, we were focusing on one key problem in PDE-based image
compression: the spatial and tonal data optimisation. Here we aimed at
obtaining insights into the full potential of the quality gains, without 
imposing any constraints on the type of inpainting operator, 
the computational costs, or the coding costs for the optimised data.
Our strategy was to start with more restricted settings that allow
a deeper understanding of the problem, and to extend our findings to
more general scenarios afterwards.

In the 1D setting, we restricted ourselves to homogeneous diffusion
impainting of strictly convex functions. For the resulting free knot
problem for linear spline interpolation, we came up with a new algorithm
and discussed optimal approximation results. We showed the nonconvexity
of the problem for more than three knots, and we saw that a splitting 
into a spatial optimisation step followed by a tonal one does not 
deteriorate the reconstruction quality substantially.

This motivated us to use such a splitting strategy also for the 2D 
setting and without restrictions on the convexity or concavity of the
image data. For spatial data optimisation with homogeneous diffusion
inpainting, we studied a probabilistic sparsification approach with 
nonlocal pixel exchange in detail. For the subsequent tonal optimisation 
step we established the existence of a unique solution. We showed that 
it can be found efficiently with a gradient descent approach which is 
accelerated by a fast explicit diffusion (FED) strategy.

In our investigations we were aiming at fairly generic algorithms 
that can also be applied to other inpainting operators such as 
biharmonic inpainting and inpainting based on edge-enhancing 
anisotropic diffusion (EED). Data optimisation for EED-based 
inpainting allowed us to come up with reconstructions of hitherto 
unparalleled quality.

Our framework for data optimisation permits to specify the desired mask 
density {\em a priori}. This is a conceptual advantage over approaches 
that have to tune a regularisation parameter in order to influence the 
density which results {\em a posteriori} \cite{HSW13,CRP14}. In practice 
this means that the latter approaches may have to run the algorithm 
many times.

While our results show the large potential of PDE-based inpainting, it 
should be emphasised that data optimisation is a key problem, but not
the only one that must be solved for building well-performing codecs: 
For example, both the location and the grey values of the data must be 
stored efficiently, and some real-world applications require very fast 
algorithms. This poses additional constraints and challenges that will 
be addressed in our forthcoming publications.

%----------------------------------------------------------------------------

\paragraph{Acknowledgements}
 Our research is partly funded by the Deutsche Forschungsgemeinschaft (DFG)
 through a Gottfried Wilhelm Leibniz Prize. This is gratefully acknowledged. 
 We also thank Pascal Gwosdek and Christian Schmaltz for providing the
 electrostatic halftoning images for us.

%%%%%%%%%%%%%%%%%%%%%%%%%%%%%%%%%%%%%%%%%%%%%%%%%%%%%%%%%%%%%%%%%%%%%%%%%%%%%

\bibliographystyle{spmpsci}
\bibliography{hoeltgen}

%%%%%%%%%%%%%%%%%%%%%%%%%%%%%%%%%%%%%%%%%%%%%%%%%%%%%%%%%%%%%%%%%%%%%%%%%%%%%

\end{document}